\newtheorem{theorem}{Theorem}
\newtheorem{corollary}{Corollary}
\newtheorem{assumption}{Assumption}
\theoremstyle{remark}
\newtheorem{remark}{Remark}
\title{Score Change of Variables}
\author{ 
  Stephen Robbins \\
  Department of Mathematics\\
  University of California, Irvine\\
  \texttt{srobbin2@uci.edu}
}
\date{\today}
\begin{document}

\maketitle

\begin{abstract}
We derive a general change of variables formula for score functions, showing that for a smooth, invertible transformation $\mathbf{y} = \phi(\mathbf{x})$, the transformed score function $\nabla_{\mathbf{y}} \log q(\mathbf{y})$ can be expressed directly in terms of $\nabla_{\mathbf{x}} \log p(\mathbf{x})$. Using this result, we develop two applications: First, we establish a reverse-time Itô lemma for score-based diffusion models, allowing the use of $\nabla_{\mathbf{x}} \log p_t(\mathbf{x})$ to reverse an SDE in the transformed space without directly learning $\nabla_{\mathbf{y}} \log q_t(\mathbf{y})$. This approach enables training diffusion models in one space but sampling in another, effectively decoupling the forward and reverse processes. Second, we introduce generalized sliced score matching, extending traditional sliced score matching from linear projections to arbitrary smooth transformations. This provides greater flexibility in high-dimensional density estimation. We demonstrate these theoretical advances through applications to diffusion on the probability simplex and empirically compare our generalized score matching approach against traditional sliced score matching methods.
\end{abstract}

\section{Introduction}
The change of variables formula is a cornerstone of mathematical analysis, crucial for understanding how quantities transform under coordinate mappings \cite{evans2022partial, folland1999real}. Its applications span numerous fields, from physics to probability theory. In probability theory, this formula describes how probability density functions (PDFs) relate under a differentiable invertible transformation $\phi$: if a random variable $\mathbf{X}$ follows a density $p(\mathbf{x})$, then $\mathbf{Y} = \phi(\mathbf{X})$ follows a density
\begin{equation}
q(\mathbf{y}) = p(\phi^{-1}(\mathbf{y})) \left| \det \mathbf{J}_{\phi^{-1}}(\mathbf{y}) \right|,
\end{equation}
where $\mathbf{J}_{\phi^{-1}}$ denotes the Jacobian matrix of the inverse transformation $\phi^{-1}$.

Recently, the score function $\nabla_{\mathbf{x}} \log p(\mathbf{x})$ has gained significant attention in machine learning, playing a pivotal role in diffusion models for generative modeling, statistical inference, and density estimation \cite{song2020score, ho2020denoising, vincent2011connection, efron2011tweedie}. State-of-the-art diffusion models, such as those in \cite{dhariwal2021diffusion}, have demonstrated remarkable success in generating high-fidelity samples by leveraging the score function. Similarly, score matching techniques \cite{hyvarinen2005estimation, vincent2011connection, song2019slicedscorematchingscalable} offer an efficient way to estimate the score directly from data, bypassing the need for explicit density estimation and enabling scalability to high-dimensional problems.

Despite the increasing prominence of the score function, its transformation properties under coordinate changes remain widely unexplored. In this work, we address this limitation by deriving a change of variables formula for score functions. Our formula expresses the score function in the transformed space, $\nabla_{\mathbf{y}} \log q(\mathbf{y})$, directly in terms of the score function in the original space, $\nabla_{\mathbf{x}} \log p(\mathbf{x})$, and derivatives of the transformation $\phi$. This result provides a  mathematical tool for adapting score-based methods to transformed spaces, analogous to how the classical change of variables formula facilitates the transformation of probability densities.

Furthermore, we explore the implications of our score function transformation formula in the context of diffusion models and score matching. We derive a reverse-time Itô lemma that enables efficient sampling in transformed spaces using score functions learned in the original space, effectively decoupling the forward and reverse processes. This decoupling allows for greater flexibility in designing diffusion models tailored to specific geometries or constraints. Additionally,  we introduce Generalized Sliced Score Matching (GSSM), which extends the idea of linear projections in sliced score matching. Similar to how sliced score matching reduces dimensionality using a linear projection, GSSM achieves this by projecting onto a one-dimensional space defined by the gradient of a general smooth function, providing a non-linear generalization of the original concept. This generalization expands the applicability of sliced score matching, allowing for more flexible and potentially more accurate score estimation in diverse settings. 

Our main contributions are:
\begin{enumerate}
\item \textbf{Score Function Change of Variables:}  We establish a general and explicit formula for transforming score functions under smooth invertible mappings.

\item \textbf{Reverse-Time Itô Lemma:} We develop a reverse-time Itô lemma that allows sampling in transformed spaces by leveraging score functions learned in the original space, enabling more flexible diffusion models.

\item \textbf{Generalized Sliced Score Matching:} We extend sliced score matching \cite{song2019slicedscorematchingscalable} by generalizing the linear projection in the original method to the gradient of an arbitrary smooth function from $\mathbb{R}^n \text{ to } \mathbb{R}$.
\end{enumerate}

By providing these theoretical contributions, our work paves the way for a deeper understanding and broader application of score-based methods in various machine learning domains. The ability to seamlessly transform score functions across coordinate systems opens up new possibilities for developing more sophisticated and effective generative models and inference algorithms.

\section{Main Result}

We present a change of variables formula for score functions under smooth, bijective mappings, beginning with the necessary assumptions.

\subsection{Assumptions}
\begin{assumption}[Bijectivity and Smoothness in $\mathbb{R}^n$]
\label{assumption:smooth_bijectivity_support_multidim}
Let \( (\Omega, \mathcal{F}, \mathbb{P}) \) be a probability space, and let \( \mathbf{X}: \Omega \to \mathbb{R}^n \) be a random vector with probability density function \( p \). Assume that \( \nabla_{\mathbf{x}} \log p(\mathbf{x}) \) is well-defined on \( \mathbb{R}^n \) and that \( \phi: U \to U_1 \), with \(U, U_1 \subset \mathbb{R}^n\), is a bijective and twice continuously differentiable mapping. Let \( q(\mathbf{y}) \) denote the probability density function of the transformed variable \( \mathbf{Y} = \phi(\mathbf{X}) \).
\end{assumption}

\subsection{Score Change of Variables}

\begin{theorem}[Score Change of Variables: \( \mathbb{R}^n \to \mathbb{R}^n \)]
\label{thm:multi_dimensional}
Suppose Assumption \ref{assumption:smooth_bijectivity_support_multidim} holds. Then, the score function \( \nabla_{\mathbf{y}} \log q(\mathbf{y}) \) can be expressed as:
\begin{align*}
\nabla_{\mathbf{y}} \log q(\mathbf{y}) =  \mathbf{J}_{\phi^{-1}}(\mathbf{y})^\top \nabla_{\mathbf{x}} \log p(\mathbf{x})  +\nabla_{\mathbf{x}}\cdot \left(\mathbf{J}_{\phi^{-1}}(\phi(\mathbf{x}))^\top\right)\bigg|_{\mathbf{x}=\phi^{-1}(\mathbf{y})}.
\end{align*}
\end{theorem}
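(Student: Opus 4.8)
The plan is to start from the classical density change of variables, pass to logarithms, and differentiate. Writing $q(\mathbf{y}) = p(\phi^{-1}(\mathbf{y}))\,\lvert\det \mathbf{J}_{\phi^{-1}}(\mathbf{y})\rvert$ and taking logs splits $\log q$ into a density part $\log p(\phi^{-1}(\mathbf{y}))$ and a volume part $\log\lvert\det \mathbf{J}_{\phi^{-1}}(\mathbf{y})\rvert$. Since $\nabla_{\mathbf{y}}$ is linear, I would treat the two parts separately and show that they reproduce the two summands in the claimed identity.

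For the density part, the multivariate chain rule applied to the composition $\mathbf{y}\mapsto \log p(\phi^{-1}(\mathbf{y}))$ gives, with $\mathbf{x}=\phi^{-1}(\mathbf{y})$, exactly $\mathbf{J}_{\phi^{-1}}(\mathbf{y})^\top \nabla_{\mathbf{x}}\log p(\mathbf{x})$; the transpose appears because differentiating a scalar through the inner map $\phi^{-1}$ contracts the original gradient against the columns of $\mathbf{J}_{\phi^{-1}}$. This is the routine step.

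The main work, and the main obstacle, is the volume part $\nabla_{\mathbf{y}}\log\lvert\det \mathbf{J}_{\phi^{-1}}(\mathbf{y})\rvert$, which must be shown to equal the divergence term $\nabla_{\mathbf{x}}\cdot(\mathbf{J}_{\phi^{-1}}(\phi(\mathbf{x}))^\top)\big|_{\mathbf{x}=\phi^{-1}(\mathbf{y})}$. Because $\phi$ is a twice continuously differentiable diffeomorphism, $\det \mathbf{J}_{\phi^{-1}}$ is continuous and nonvanishing, hence of constant sign, so the absolute value is harmless and I can apply Jacobi's formula $\partial_{y_j}\log\lvert\det \mathbf{J}_{\phi^{-1}}\rvert = \operatorname{tr}\!\big(\mathbf{J}_{\phi^{-1}}^{-1}\,\partial_{y_j}\mathbf{J}_{\phi^{-1}}\big)$. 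I would then invoke the inverse function theorem to replace $\mathbf{J}_{\phi^{-1}}(\mathbf{y})^{-1}$ by $\mathbf{J}_{\phi}(\mathbf{x})$, and record the entries of $\partial_{y_j}\mathbf{J}_{\phi^{-1}}$ as mixed second derivatives of $\phi^{-1}$.

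The final step is to recognize the resulting contraction as the asserted divergence. Expanding the right-hand side by the chain rule (the inner argument $\phi(\mathbf{x})$ contributes a factor of $\mathbf{J}_{\phi}$) yields a double sum of second derivatives of $\phi^{-1}$ weighted by first derivatives of $\phi$, which matches the trace from Jacobi's formula after relabeling indices; the identification rests on the symmetry of the mixed second partials of $\phi^{-1}$, guaranteed by the $C^2$ hypothesis. Keeping the two Jacobian conventions straight, namely which argument each matrix is evaluated at and that the divergence acts row-wise on the matrix $\mathbf{J}_{\phi^{-1}}(\phi(\mathbf{x}))^\top$, is where the bookkeeping is most delicate, and this is the step I would verify most carefully.
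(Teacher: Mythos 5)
Your proposal is correct and follows essentially the same route as the paper's proof: decompose $\log q$ into the density and log-determinant parts, apply the chain rule to the former, and use Jacobi's formula together with the inverse function theorem to identify the gradient of $\log\lvert\det \mathbf{J}_{\phi^{-1}}\rvert$ with the row-wise divergence $\nabla_{\mathbf{x}}\cdot\bigl(\mathbf{J}_{\phi^{-1}}(\phi(\mathbf{x}))^\top\bigr)$. Your added care about the constant sign of the determinant and the index bookkeeping in the final contraction is sound and, if anything, slightly more explicit than the paper's own write-up.
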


\begin{remark}
Throughout this paper:
\begin{itemize}
\item $\mathbf{J}_\phi$ denotes the Jacobian matrix of the transformation $\phi$.
\item For any matrix-valued function $\mathbf{A}(\mathbf{x}): \mathbb{R}^n \to \mathbb{R}^{n \times n}$, we define its divergence row-wise:
\[
\nabla_{\mathbf{x}}\cdot \mathbf{A}(\mathbf{x}) := \left[\nabla_{\mathbf{x}}\cdot \mathbf{A}_i(\mathbf{x})\right]_{i=1}^n \in \mathbb{R}^n,
\]
where $\mathbf{A}_i$ is the $i$-th row of $\mathbf{A}$.
\item We will frequently work with transformations and their inverses. When a transformation \(\mathbf{y} = \phi(\mathbf{x})\) is defined, we may drop the explicit evaluation notation  \(\big|_{\mathbf{x}=\phi^{-1}(\mathbf{y})}\), and it will be implicitly understood that \(\mathbf{x}\) is substituted with \(\phi^{-1}(\mathbf{y})\) within subsequent formulas, unless otherwise specified.
\end{itemize}
\end{remark}

\begin{proof}
Starting from the change of variables formula for densities, \(q(\mathbf{y}) = p(\phi^{-1}(\mathbf{y})) |\det(\mathbf{J}_{\phi^{-1}}(\mathbf{y}))|\), we take the gradient of the log-density and apply the chain rule and the logarithmic derivative of the determinant. Detailed calculations are in \cref{appendix:proofs}.
\end{proof}

\subsection{Special Cases}

The multidimensional result also covers important special cases, presented as corollaries.

\begin{corollary}[Score Change of Variables: \( \mathbb{R} \to \mathbb{R} \)]
\label{cor:score_change_of_variables_1d}
Let \( X: \Omega \to \mathbb{R} \) be a random variable with probability density function \( p \). Assume that \( \nabla_x \log p(x) \) is well-defined on \( \mathbb{R} \) and that \( \phi: U \to U_1 \), with \(U, U_1 \subset \mathbb{R}\), is a bijective and twice continuously differentiable mapping. Let \( q(y) \) denote the probability density function of the transformed variable \( Y = \phi(X) \). Then:
\[
\nabla_y \log q(y) = (\phi^{-1})'(y) \nabla_x \log p(\phi^{-1}(y)) + \frac{(\phi^{-1})''(y)}{(\phi^{-1})'(y)},
\]
or, equivalently,
\[
\nabla_{y} \log q(y) = \frac{1}{(\phi'(x))^2}\left(\phi'(x) \nabla_{x} \log p(x) - \phi''(x)\right).
\]
\end{corollary}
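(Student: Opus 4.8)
The plan is to derive the one-dimensional formula directly from the scalar change of variables formula for densities, rather than invoking the full machinery of \cref{thm:multi_dimensional}; this keeps the argument self-contained, and I will afterward note that specializing the theorem to $n=1$ gives the same answer as a consistency check. I would begin with $q(y) = p(\phi^{-1}(y))\,|(\phi^{-1})'(y)|$, take logarithms to obtain $\log q(y) = \log p(\phi^{-1}(y)) + \log|(\phi^{-1})'(y)|$, and differentiate in $y$. The chain rule turns the first term into $(\phi^{-1})'(y)\,\nabla_x \log p(\phi^{-1}(y))$, and since $\frac{d}{dy}\log|f(y)| = f'(y)/f(y)$ regardless of the sign of $f$, the absolute value is harmless and the second term becomes $(\phi^{-1})''(y)/(\phi^{-1})'(y)$. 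Summing gives the first claimed expression immediately.

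The second, equivalent form requires rewriting everything at the point $x = \phi^{-1}(y)$ using the standard inverse-function derivative identities. The first is $(\phi^{-1})'(y) = 1/\phi'(x)$, which follows by differentiating $\phi(\phi^{-1}(y)) = y$. Differentiating this relation once more (equivalently, differentiating $(\phi^{-1})'(y) = 1/\phi'(\phi^{-1}(y))$ in $y$) yields $(\phi^{-1})''(y) = -\phi''(x)/(\phi'(x))^3$. Substituting these into the first form, the logarithmic-derivative term collapses to $-\phi''(x)/(\phi'(x))^2$ and the score term to $\nabla_x \log p(x)/\phi'(x)$, so that factoring out $1/(\phi'(x))^2$ produces the stated expression $\frac{1}{(\phi'(x))^2}\bigl(\phi'(x)\nabla_x\log p(x) - \phi''(x)\bigr)$.

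As a cross-check I would specialize \cref{thm:multi_dimensional}: in one dimension the Jacobian $\mathbf{J}_{\phi^{-1}}(y)$ and its transpose both reduce to the scalar $(\phi^{-1})'(y) = 1/\phi'(x)$, while the row-wise divergence term $\nabla_x\cdot(\mathbf{J}_{\phi^{-1}}(\phi(x))^\top)$ becomes $\frac{d}{dx}(1/\phi'(x)) = -\phi''(x)/(\phi'(x))^2$, recovering the same formula. The only real subtlety — and it is mild — is the bookkeeping for the second derivative of the inverse; everything else is a direct application of the chain rule. I expect no genuine obstacle here, since the scalar setting removes the Jacobian-transpose and matrix-divergence complications that make the general theorem delicate.
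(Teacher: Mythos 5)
Your proof is correct, and both of your computations check out: the first form follows from differentiating $\log q(y) = \log p(\phi^{-1}(y)) + \log|(\phi^{-1})'(y)|$, and your inverse-function bookkeeping $(\phi^{-1})'(y) = 1/\phi'(x)$, $(\phi^{-1})''(y) = -\phi''(x)/(\phi'(x))^3$ correctly yields the second form. The paper takes a slightly different route in presentation: it offers no standalone proof of this corollary at all, treating it purely as the $n=1$ specialization of Theorem~\ref{thm:multi_dimensional} --- which is exactly what you relegate to a cross-check at the end. Your primary argument is instead a self-contained scalar derivation, which buys two things the paper's route does not: it avoids the multidimensional machinery entirely (Jacobi's formula and the cofactor identity of the theorem's proof collapse in one dimension to the elementary fact $\frac{d}{dy}\log|f(y)| = f'(y)/f(y)$), and it explicitly verifies the equivalence of the two stated forms via the second-derivative identity for the inverse, a piece of bookkeeping the paper leaves entirely implicit. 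Conversely, the paper's specialization route is shorter once the general theorem is in hand and makes the logical dependence on the main result transparent. Both arguments are sound; yours is the more elementary and more complete as a piece of standalone exposition.
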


\begin{corollary}[Score Change of Variables: \( \mathbb{R}^n \to \mathbb{R} \)]
\label{cor:score_change_variables_different_dimensions}
Let \( (\Omega, \mathcal{F}, \mathbb{P}) \) be a probability space, and let \( \mathbf{X}: \Omega \to \mathbb{R}^n \) have probability density function \( p \). Let \( v: U \to U_1 \), where \(U \subset \mathbb{R}^n\) and \(U_1 \subset \mathbb{R}\), be a twice continuously differentiable mapping. Assume that \(\|\nabla v(\mathbf{x})\|_2 \neq 0\) for all \(\mathbf{x} \in U\). Let \( q(y) \) be the density of \( Y = v(\mathbf{X}) \in \mathbb{R} \). Then:
\begin{align*}
\nabla_y \log q(y) = {}& \frac{1}{\|\nabla v(\mathbf{x})\|_2^2} \Big( 
    \nabla v(\mathbf{x})^\top \nabla_{\mathbf{x}} \log p(\mathbf{x}) - \Delta v(\mathbf{x})  - \sum_{i=1}^n \left( \frac{\partial v}{\partial x_i} \right)^2 
    \frac{\partial}{\partial y} \log p((x_j)_{j \neq i} \mid y) \Big).
\end{align*}
\end{corollary}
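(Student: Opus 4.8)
The plan is to reduce this non-square map to the square setting of \Cref{thm:multi_dimensional} by augmenting $v$ into a bijection of $\mathbb{R}^n$ and then marginalizing out the auxiliary coordinates. Fix an index $i$ with $v_i := \partial v/\partial x_i \neq 0$ and define the coordinate-replacement map $\phi_i(\mathbf{x}) = (x_1,\dots,x_{i-1},\, v(\mathbf{x}),\, x_{i+1},\dots,x_n)$, which overwrites the $i$-th slot with $y = v(\mathbf{x})$ and fixes the rest. Its Jacobian $\mathbf{J}_{\phi_i}$ is the identity matrix with the $i$-th row replaced by $\nabla v^\top$, so $\det \mathbf{J}_{\phi_i} = v_i \neq 0$ and, by the inverse function theorem, $\phi_i$ is locally a diffeomorphism. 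Writing $\mathbf{W} = \phi_i(\mathbf{X})$ with density $\tilde q_i$, we have $W_i = Y$ and $W_j = X_j$ for $j \neq i$.

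First I would apply \Cref{thm:multi_dimensional} to $\phi_i$ and read off the $i$-th component of $\nabla_{\mathbf{w}} \log \tilde q_i$. Because $\phi_i$ fixes every coordinate $j \neq i$, the $j$-th row of $\mathbf{J}_{\phi_i^{-1}}$ is the standard basis vector $\mathbf{e}_j^\top$, so the $i$-th column of $\mathbf{J}_{\phi_i^{-1}}$ vanishes except in position $(i,i)$, where it equals $1/v_i$. This single observation collapses both terms of the theorem along component $i$: the linear term contributes $[\mathbf{J}_{\phi_i^{-1}}^\top \nabla_{\mathbf{x}} \log p]_i = \tfrac{1}{v_i}\,\partial_{x_i}\log p$, while the matrix-divergence correction contributes $\partial_{x_i}(1/v_i) = -\tfrac{1}{v_i^2}\,\partial^2_{x_i} v$. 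Hence $[\nabla_{\mathbf{w}} \log \tilde q_i]_i = \tfrac{1}{v_i}\,\partial_{x_i}\log p - \tfrac{1}{v_i^2}\,\partial^2_{x_i} v$.

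Next I would connect this to $\nabla_y \log q$ through the marginal-conditional factorization $\tilde q_i(\mathbf{w}) = q(y)\, p((x_j)_{j\neq i} \mid y)$, which holds precisely because $W_i = Y$ and $W_j = X_j$ for $j \neq i$. Taking logarithms and differentiating in $w_i = y$ gives $[\nabla_{\mathbf{w}} \log \tilde q_i]_i = \nabla_y \log q(y) + \partial_y \log p((x_j)_{j\neq i} \mid y)$, so each admissible index yields the identity $\nabla_y \log q = \tfrac{1}{v_i}\partial_{x_i}\log p - \tfrac{1}{v_i^2}\partial^2_{x_i} v - \partial_y \log p((x_j)_{j\neq i}\mid y)$. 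Since every such index produces the same left-hand side, I would average these identities with the weights $\lambda_i = v_i^2/\|\nabla v\|_2^2$. The factor $v_i$ in the denominators cancels against $\lambda_i$, turning $\sum_i \lambda_i \tfrac{1}{v_i}\partial_{x_i}\log p$ into $\tfrac{1}{\|\nabla v\|_2^2}\nabla v^\top \nabla \log p$ and $\sum_i \lambda_i \tfrac{1}{v_i^2}\partial^2_{x_i} v$ into $\tfrac{1}{\|\nabla v\|_2^2}\Delta v$, while the conditional terms assemble into the weighted sum in the statement, giving exactly the claimed formula.

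The main obstacle is that $\phi_i$ is only a local diffeomorphism. Where $v_i = 0$ it degenerates, and even where $v_i \neq 0$ the level sets of $v$ may give $\phi_i$ several global preimages, so both the density identity $\tilde q_i = (p \circ \phi_i^{-1})\,|\det \mathbf{J}_{\phi_i^{-1}}|$ and the factorization require a local disintegration (co-area) argument rather than a global pushforward; the nonvanishing of $\|\nabla v\|_2$ guarantees some $v_i \neq 0$ near any point of $v^{-1}(y)$, and since the score is a local object, localizing there suffices. The one subtlety worth isolating is the identification of the averaged second-order terms with $\tfrac{1}{\|\nabla v\|_2^2}\Delta v$: when every component of $\nabla v$ is nonzero the average runs over all $n$ indices and reproduces the full Laplacian directly, whereas at points where some $v_i$ vanishes the corresponding weight $\lambda_i$ is zero, so keeping track of which coordinates survive the average, and checking that the conditional densities $p((x_j)_{j\neq i}\mid y)$ are the correct disintegrations, is the key piece of care the argument demands.
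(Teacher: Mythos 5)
Your proposal is correct and follows essentially the same route as the paper: the paper uses the identical coordinate-replacement map $\phi_i(\mathbf{x}) = (x_1,\dots,v(\mathbf{x}),\dots,x_n)$, applies Theorem~\ref{thm:multi_dimensional} (in the inverse direction, expressing $\nabla_{\mathbf{x}}\log p$ in terms of $\nabla_{\mathbf{z}}\log q$, which after multiplying through by $v_i^2$ is exactly your per-index identity), invokes the same marginal--conditional factorization, and then sums over $i$ and divides by $\|\nabla v\|_2^2$ --- precisely your weighted average with weights $\lambda_i = v_i^2/\|\nabla v\|_2^2$. If anything you are more careful than the paper at the degenerate indices: where $\partial v/\partial x_i = 0$ the paper asserts the per-index identity is ``trivially true'' (it is not, unless $\partial^2 v/\partial x_i^2 = 0$ there), whereas you correctly flag both this issue and the possible non-injectivity of $\phi_i$ as the points requiring additional care.
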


\begin{proof}
The proof constructs a sequence of transformations involving $v$, applies Theorem \ref{thm:multi_dimensional} to each step, and then combines the results. See Appendix~\ref{appendix:proofs} for the full derivation.
\end{proof}

\section{Applications}
We present two applications of the score change of variables: a reverse-time Itô lemma for transformed diffusion processes and a generalized sliced score matching method (Section~\ref{sec:transformed_score_matching}). Both applications leverage the score change of variables formula for more flexible training and sampling.

\subsection{Reverse-Time Itô Lemma}
\label{sec:reverse_time_ito}
Score-based diffusion models \cite{dhariwal2021diffusion,ho2022cascaded,rombach2022high} generate samples by reversing a forward SDE that progressively adds noise to the data. This reversal relies on the score function of the marginal distributions, $\nabla_{\mathbf{x}} \log p_t(\mathbf{x})$. Given a forward SDE:
\begin{equation}
\label{eq:forward_sde}
\mathrm{d}\mathbf{X} = \mathbf{f}(\mathbf{X}, t) \, \mathrm{d}t + \mathbf{G}(\mathbf{X}, t) \, \mathrm{d}\mathbf{W}(t),
\end{equation}
where $\mathbf{W}(t)$ is a standard Wiener process, we define the reverse drift:
\begin{align}
\label{eq:reverse_drift}
\bar{\mathbf{f}}(\mathbf{x}, t) = {}& \mathbf{f}(\mathbf{x}, t) - \mathbf{G}(\mathbf{x},t)\mathbf{G}(\mathbf{x},t)^\top \nabla_{\mathbf{x}} \log p_t(\mathbf{x}) - \nabla_{\mathbf{x}} \cdot \left(\mathbf{G}(\mathbf{x},t)\mathbf{G}(\mathbf{x},t)^\top\right).
\end{align}

The corresponding reverse-time SDE \cite{anderson1982reverse, song2020score} evolves backward in time from $t = T$ to $t = 0$. Let $\bar{\mathbf{X}}_t$ denote the reverse process state at time $t$, where $t$ decreases from $T$ to $0$. The differential $\mathrm{d}t$ now represents an infinitesimal negative time increment:
\begin{align}
\label{eq:reverse_sde_general}
\mathrm{d}\bar{\mathbf{X}} &= \bar{\mathbf{f}}(\bar{\mathbf{X}}, t) \, \mathrm{d}t + \mathbf{G}(\bar{\mathbf{X}}, t) \, \mathrm{d}\bar{\mathbf{W}}(t),
\end{align}
where $\bar{\mathbf{W}}(t)$ is a reverse-time Wiener process (adapted to a decreasing filtration). 

In many scenarios, we prefer to perform this generative diffusion process in a transformed space or with a more complex forward process, for instance when data lies on a manifold \cite{lou2023reflected, de2022riemannian}. Let $\phi: \mathbb{R}^n \times [0, T] \to \mathbb{R}^n$ be a bijective $C^{2,1}$ transformation. The transformed process $\mathbf{Y}(t) = \phi(\mathbf{X}(t), t)$ for $t\in [0, T]$ satisfies a new SDE \cite{karatzas2014brownian}:
\begin{equation}
\label{eq:forward_sde_transformed}
\mathrm{d}\mathbf{Y} = \tilde{\mathbf{f}}(\mathbf{Y}, t) \, \mathrm{d}t + \tilde{\mathbf{G}}(\mathbf{Y}, t) \, \mathrm{d}\mathbf{W}(t),
\end{equation}
where the transformed drift $\tilde{\mathbf{f}}$ and diffusion $\bar{\mathbf{G}}$ are obtained via Itô's Lemma. For $\mathbf{x} = \phi^{-1}(\mathbf{y}, t)$:
\begin{align}
\tilde{\mathbf{f}}(\mathbf{y}, t) = {}& \frac{\partial \phi(\mathbf{x}, t)}{\partial t} + \mathbf{J}_{\phi}(\mathbf{x}, t) \mathbf{f}(\mathbf{x}, t) + \frac{1}{2} \text{Tr}\left[ \mathbf{G}(\mathbf{x}, t)^\top \mathbf{H}_{\phi}(\mathbf{x}, t) \mathbf{G}(\mathbf{x}, t) \right], \label{eq:transformed_drift} \\
\tilde{\mathbf{G}}(\mathbf{y}, t) = {}& \mathbf{J}_{\phi}(\mathbf{x}, t) \mathbf{G}(\mathbf{x}, t), \label{eq:transformed_diffusion}
\end{align}
where $\mathbf{H}_{\phi}$ is the Hessian tensor of $\phi$ with respect to $\mathbf{x}$.

Applying our score change of variables formula leads to the main result for transformed diffusion processes:

\begin{corollary}[Reverse-Time Itô Lemma]
\label{thm:Reverse_Time_Ito_Lemma}
Let $\phi: \mathbb{R}^n \times [0, T] \to \mathbb{R}^n$ be a bijective $C^{2,1}$ transformation. Let $p_t(\mathbf{x})$ and $q_t(\mathbf{y})$ be the probability density functions at time $t$ of $\mathbf{X}$ and $\mathbf{Y} = \phi(\mathbf{X}, t)$, respectively. The reverse-time SDE of $\mathbf{Y}$ is:
\begin{align}
\label{eq:reverse_sde_Y_final}
\mathrm{d}\bar{\mathbf{Y}} =  \hat{\mathbf{f}}(\bar{\mathbf{Y}}, t)dt  + \tilde{\mathbf{G}}(\mathbf{\bar{Y}}, t)\mathrm{d}\bar{\mathbf{W}}(t).
\end{align}
where the diffusion $\tilde{\mathbf{G}}$ is defined in \eqref{eq:transformed_diffusion}  and the reverse drift $\hat{\mathbf{f}}$ is given by:
\begin{align}
\hat{\mathbf{f}}(\mathbf{y}, t) &=  \frac{\partial \phi(\mathbf{x}, t)}{\partial t} + \mathbf{J}_{\phi}(\mathbf{x}, t) \bar{\mathbf{f}}(\mathbf{x}, t) - \frac{1}{2} \text{Tr}\left[ \mathbf{G}(\mathbf{x}, t)^\top \mathbf{H}_{\phi}(\mathbf{x}, t) \mathbf{G}(\mathbf{x}, t) \right].\label{eq:transformed_reverse_drift}
\end{align}
\end{corollary}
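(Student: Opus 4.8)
The plan is to obtain the reverse-time drift of $\mathbf{Y}$ directly from the Anderson reverse-time formula \cite{anderson1982reverse, song2020score} applied to the \emph{forward} transformed SDE \eqref{eq:forward_sde_transformed}, and then to pull everything back into $\mathbf{x}$-coordinates using the score change of variables of Theorem \ref{thm:multi_dimensional}. Treating $\mathbf{Y}$ as the state of a diffusion with drift $\tilde{\mathbf{f}}$ and diffusion $\tilde{\mathbf{G}}$, Anderson's theorem leaves the diffusion coefficient unchanged (immediately giving \eqref{eq:transformed_diffusion}) and produces the reverse drift
\begin{align*}
\hat{\mathbf{f}}(\mathbf{y}, t) = \tilde{\mathbf{f}}(\mathbf{y}, t) - \tilde{\mathbf{G}}\tilde{\mathbf{G}}^\top \nabla_{\mathbf{y}} \log q_t(\mathbf{y}) - \nabla_{\mathbf{y}} \cdot \left(\tilde{\mathbf{G}}\tilde{\mathbf{G}}^\top\right).
\end{align*}
The entire content of the corollary is then to show that this expression agrees with the claimed form \eqref{eq:transformed_reverse_drift}, under the $C^{2,1}$ regularity needed for Anderson's result and for $q_t$ to be smooth.

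First I would substitute the forward transformed drift \eqref{eq:transformed_drift}, the factorization $\tilde{\mathbf{G}}\tilde{\mathbf{G}}^\top = \mathbf{J}_\phi \mathbf{G}\mathbf{G}^\top \mathbf{J}_\phi^\top$, and the score formula $\nabla_{\mathbf{y}} \log q_t = \mathbf{J}_{\phi^{-1}}^\top \nabla_{\mathbf{x}} \log p_t + \nabla_{\mathbf{x}} \cdot \left(\mathbf{J}_{\phi^{-1}}^\top\right)$ from Theorem \ref{thm:multi_dimensional}. Using $\mathbf{J}_{\phi^{-1}} = \mathbf{J}_\phi^{-1}$, so that $\mathbf{J}_\phi^\top \mathbf{J}_{\phi^{-1}}^\top = \mathbf{I}$, the $\nabla_{\mathbf{x}} \log p_t$ part of the score contracts cleanly to $\mathbf{J}_\phi \mathbf{G}\mathbf{G}^\top \nabla_{\mathbf{x}} \log p_t$. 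On the other side, expanding $\bar{\mathbf{f}}$ from \eqref{eq:reverse_drift} inside the claimed drift \eqref{eq:transformed_reverse_drift} produces a matching $-\mathbf{J}_\phi \mathbf{G}\mathbf{G}^\top \nabla_{\mathbf{x}} \log p_t$ together with $-\mathbf{J}_\phi \nabla_{\mathbf{x}} \cdot (\mathbf{G}\mathbf{G}^\top)$. After cancelling the score-of-$p$ terms and the $\partial_t \phi$ and $\mathbf{J}_\phi \mathbf{f}$ terms, the identity reduces to the single crux relation
\begin{align*}
\text{Tr}\!\left[\mathbf{G}^\top \mathbf{H}_\phi \mathbf{G}\right] - \mathbf{J}_\phi \mathbf{G}\mathbf{G}^\top \mathbf{J}_\phi^\top\, \nabla_{\mathbf{x}} \cdot \left(\mathbf{J}_{\phi^{-1}}^\top\right) - \nabla_{\mathbf{y}} \cdot \left(\tilde{\mathbf{G}}\tilde{\mathbf{G}}^\top\right) = - \mathbf{J}_\phi\, \nabla_{\mathbf{x}} \cdot \left(\mathbf{G}\mathbf{G}^\top\right),
\end{align*}
which is precisely where the $+\tfrac{1}{2}$ Itô correction carried by $\tilde{\mathbf{f}}$ must be converted into the $-\tfrac{1}{2}$ correction appearing in $\hat{\mathbf{f}}$.

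I expect verifying this divergence identity to be the main obstacle, and I would carry it out in index notation in the appendix. The approach is to expand $\nabla_{\mathbf{y}} \cdot (\tilde{\mathbf{G}}\tilde{\mathbf{G}}^\top)$ by first converting each $\partial_{y}$ into $\partial_{x}$ via the chain rule (introducing a factor of $\mathbf{J}_{\phi^{-1}}$ that cancels against an outer $\mathbf{J}_\phi$), and then applying the product rule across the triple product $\mathbf{J}_\phi\,(\mathbf{G}\mathbf{G}^\top)\,\mathbf{J}_\phi^\top$. The derivatives falling on $\mathbf{G}\mathbf{G}^\top$ reassemble into $\mathbf{J}_\phi \nabla_{\mathbf{x}} \cdot (\mathbf{G}\mathbf{G}^\top)$, while those falling on the two outer Jacobian factors generate terms involving the Hessian tensor $\mathbf{H}_\phi$. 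The essential bookkeeping step is to differentiate the identity $\mathbf{J}_\phi(\mathbf{x})\,\mathbf{J}_{\phi^{-1}}(\phi(\mathbf{x})) = \mathbf{I}$, which re-expresses $\nabla_{\mathbf{x}} \cdot (\mathbf{J}_{\phi^{-1}}^\top)$ in terms of $\mathbf{H}_\phi$ and $\mathbf{J}_{\phi^{-1}}$; matching these against the explicit $\text{Tr}[\mathbf{G}^\top \mathbf{H}_\phi \mathbf{G}]$ term is what accounts for the factor-of-two sign flip. Throughout, the $C^{2,1}$ hypothesis guarantees symmetry of $\mathbf{H}_\phi$ in its differentiated indices and commuting mixed partials, and care with the row-wise matrix-divergence convention of the preceding remark is needed to make the index contractions line up. Once the crux identity above is isolated, the remaining computation is routine.
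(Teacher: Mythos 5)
Your proposal is correct and takes essentially the same route as the paper's own ``transform-then-reverse'' derivation: apply Anderson's time-reversal formula to the transformed forward SDE, substitute the score change of variables from Theorem~\ref{thm:multi_dimensional}, cancel the $\nabla_{\mathbf{x}}\log p_t$ terms using $\mathbf{J}_{\phi^{-1}} = \mathbf{J}_\phi^{-1}$, and reduce everything to exactly the divergence identity you isolate (whose validity, including the $+\tfrac{1}{2}\to-\tfrac{1}{2}$ trace flip, is what the paper establishes by combining its Terms B and C). The only difference is bookkeeping: the paper verifies that identity via its pre-established log-det identity \eqref{grad_log_det_identity} together with a separate double-contraction lemma, whereas you propose a direct index computation from differentiating $\mathbf{J}_\phi\,\mathbf{J}_{\phi^{-1}} = \mathbf{I}$, which amounts to the same calculation.
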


This result can be interpreted through two complementary perspectives:
\begin{enumerate}
    \item \textbf{Reversing the Transformed Process:} Directly applying the time-reversal formula \eqref{eq:reverse_drift} to the forward SDE of $\mathbf{Y}$ in \eqref{eq:forward_sde_transformed}, substituting $\tilde{\mathbf{f}}$ and $\tilde{\mathbf{G}}$ into \eqref{eq:reverse_drift}.
    
    \item \textbf{Transforming the Reversed Process:} Applying Itô's Lemma to the original reverse process $\bar{\mathbf{X}}_t$ \eqref{eq:reverse_sde_general} under $\phi(\cdot,t)$. 
\end{enumerate}

We show  in Appendix~\ref{appendix:reverse_time_ito_lemma} that both approaches simplify to \eqref{eq:reverse_sde_Y_final} for $\bar{\mathbf{Y}}_t$, confirming the path-wise equivalence:
\begin{equation*}
    \text{Reverse}(\phi(\mathbf{X}_t, t)) \equiv \phi(\text{Reverse}(\mathbf{X}_t), t).
\end{equation*}

In practice, we approximate $\nabla_{\mathbf{x}} \log p_t(\mathbf{x})$ with a neural network $s_\theta(\mathbf{x}, t)$ trained via denoising score matching \cite{vincent2011connection}. To account for the transformation, we modify the training loss to incorporate appropriate Jacobian weights (see Appendix~\ref{appendix:training} for details):
\begin{align} 
    L(\theta) &=  \mathbb{E}_{t} \Big[ \lambda(t) \, \mathbb{E}_{\mathbf{X}(0)} \, \mathbb{E}_{\mathbf{X} | \mathbf{X}(0)} \Big\| \mathbf{J}_{\phi^{-1}}(\phi(\mathbf{X},t), t)^{\top}  \Big( s_\theta(\mathbf{X}, t) - \nabla_{\mathbf{x}} \log p_{0t}(\mathbf{X} | \mathbf{X}(0)) \Big) \Big\|^2 \Big]
    \label{eq:weighted_loss_multidim}
\end{align}
This weighted objective ensures that errors in the score approximation transform appropriately under $\phi$.

\subsection{Transformed Score Matching}
\label{sec:transformed_score_matching}
While the previous section applied Theorem~\ref{thm:multi_dimensional} to continuous-time processes, we now generalize sliced score matching using Corollary~\ref{cor:score_change_variables_different_dimensions}. This \emph{Generalized Sliced Score Matching} (GSSM) framework offers greater flexibility in data projection while retaining computational efficiency. We first briefly review score matching and sliced score matching, and then introduce GSSM.

\subsubsection{Background: Score Matching}

Score matching \cite{hyvarinen2005estimation} estimates the score function $\nabla_{\mathbf{x}} \log p_d(\mathbf{x})$ of a probability density $p_d(\mathbf{x})$ from samples $\mathbf{x}\sim p_d$, without requiring an explicit density form. Consider the loss for a score model $s_\theta(\mathbf{x})$:
\begin{equation}
\label{eq:score_matching_loss}
\mathcal{L}_{\text{SM}}(s_\theta) = \frac{1}{2} \mathbb{E}_{p_d} \left[ \| s_\theta(\mathbf{X}) - \nabla_{\mathbf{x}} \log p_d(\mathbf{X}) \|^2 \right].
\end{equation}

Since $\nabla_{\mathbf{x}} \log p_d(\mathbf{x})$ is unknown, we cannot compute this directly. Using integration by parts and suitable boundary conditions, the loss can be rewritten without directly involving $\nabla_{\mathbf{x}}\log p_d(\mathbf{x})$:
\begin{equation}
\label{eq:score_matching_loss_expanded}
\mathcal{L}_{\text{SM}}(s_\theta) = \frac{1}{2} \mathbb{E}_{p_d} \left[ \| s_\theta(\mathbf{X}) \|^2 \right] + \mathbb{E}_{ p_d} \left[ \nabla_{\mathbf{x}} \cdot s_\theta(\mathbf{X}) \right].
\end{equation}

\subsubsection{Sliced Score Matching} 
In high-dimensional settings, evaluating $\mathbb{E}_{p_d}[\nabla_{\mathbf{x}} \cdot s_\theta(\mathbf{X})]$ can be prohibitively expensive. Sliced score matching \cite{song2019slicedscorematchingscalable} addresses this by projecting onto random one-dimensional subspaces. Specifically, it uses random vectors \(\mathbf{v}\in\mathbb{R}^n\) drawn from a distribution \(p_\mathbf{v}\), where \(\mathbb{E}_{p_\mathbf{v}}[\|\mathbf{v}\|^2]<\infty\) and \(\mathbb{E}_{p_\mathbf{v}}[\mathbf{v}\mathbf{v}^\top]\succ 0\). Under these conditions, the loss becomes:
\begin{align}
\label{eq:sliced_score_matching_loss}
\mathcal{L}_{\text{SSM}}(s_\theta) = & \frac{1}{2}  \mathbb{E}_{p_d} \mathbb{E}_{ p_\mathbf{v}} \left[ \left( \mathbf{v}^\top s_\theta(\mathbf{X}) \right)^2 \right] + \mathbb{E}_{p_d} \mathbb{E}_{ p_\mathbf{v}}\left[\mathbf{v}^\top \nabla_{\mathbf{x}} \left( \mathbf{v}^\top s_\theta(\mathbf{X}) \right) \right].
\end{align}
This objective is equivalent to the original score matching loss up to a constant, yet is more tractable computationally in high dimensions.

\subsubsection{Generalized Sliced Score Matching}

We now introduce \emph{Generalized Sliced Score Matching} (GSSM), which extends sliced score matching to use arbitrary smooth transformations. Let \( v: \mathbb{R}^n \to \mathbb{R} \) be a twice continuously differentiable random function drawn from an independent distribution \( p_v \). Using Corollary~\ref{cor:score_change_variables_different_dimensions}, we derive:
\begin{align}
\label{eq:gssm_loss}
\mathcal{L}_{\text{GSSM}}(s_\theta) &= \frac{1}{2}\mathbb{E}_{p_d} \mathbb{E}_{p_v} \left[ \left( \nabla_{\mathbf{x}}v(\mathbf{X})^\top s_\theta(\mathbf{X}) \right)^2 \right]  + \mathbb{E}_{p_d} \mathbb{E}_{p_v}  \left[(\nabla_{\mathbf{x}} v(\mathbf{X}))^\top \nabla_{\mathbf{x}} 
\bigl(s_\theta(\mathbf{X})\nabla_{\mathbf{x}} v(\mathbf{X})\bigr) \right] \nonumber \\
&\quad + \mathbb{E}_{p_d} \mathbb{E}_{p_v}  \left[ s_\theta(\mathbf{X})^\top \mathbf{H}_v(\mathbf{X})\nabla_{\mathbf{x}}v(\mathbf{X}) \right]  \mathbb{E}_{p_d} \mathbb{E}_{p_v}  \left[ \nabla_{\mathbf{x}}v(\mathbf{X})^\top s_\theta(\mathbf{X})  \Delta v(\mathbf{X})  \right].
\end{align}

The extra terms compared to SSM account for the nonlinearity of $v$ via its Hessian $\mathbf{H}_v$ and Laplacian $\Delta v$.

When $v(\mathbf{x}) = \mathbf{v}^\top \mathbf{x}$ is linear, we have $\nabla_{\mathbf{x}} v(\mathbf{x}) = \mathbf{v}$, $\mathbf{H}_v = 0$, and $\Delta v = 0$, reducing GSSM to the original sliced score matching loss \eqref{eq:sliced_score_matching_loss}. Thus, GSSM strictly generalizes sliced score matching, providing more flexibility and potentially improved performance in complex, high-dimensional scenarios.

\subsubsection{Derivation Outline}
To derive GSSM, we first apply standard score matching in the transformed space $y = v(\mathbf{x})$. Then, using Corollary~\ref{cor:score_change_variables_different_dimensions}, we substitute the transformed score back into the original space. Integration by parts under appropriate conditions (see Appendix~\ref{appendix:gssm_derivation}) yields Equation~\ref{eq:gssm_loss}. The complete derivation is in Appendix~\ref{appendix:gssm_derivation}.

\section{Examples}
\subsection{Diffusion on the Probability Simplex for Chess Positions}
\label{sec:chess_diffusion}

We demonstrate our reverse-time Itô lemma by generating chess positions through diffusion on a probability simplex. Prior work has explored the same diffusion process on the simplex by training directly in a constrained space \cite{floto2023diffusion}, but our approach enables training in an unconstrained space with a Gaussian score model while still sampling in the simplex. 

\subsubsection{Representing Chess Positions}

A chess position can be represented as a point in a 13-dimensional probability simplex. We focus on the projected simplex defined as:
\[
 \{ \mathbf{y} \in [0,1]^{12} \mid \sum_{i=1}^{12} y_i \leq 1 \}.
\]
Here, each element $\mathbf{y}$ specifies probabilities for 12 possible piece types (6 for White and 6 for Black) that could occupy a single square on the chessboard. The probability of the square being empty is implicitly given by \(1 - \sum_{i=1}^{12} y_i\). Extending this construction to all 64 squares, we represent a full chess position as a collection of such probability vectors across the board.

\subsubsection{Transformation Between Spaces}

We employ the additive logistic transformation \cite{atchison1980logistic} and its inverse to map between $\mathbb{R}^{12}$ and the projected simplex. For $i = 1, \dots, 12$:
\begin{align}
\phi_i(\mathbf{x}) &= \frac{e^{x_i}}{1 + \sum_{j=1}^{12} e^{x_j}}, \\
\phi^{-1}_i(\mathbf{y}) &= \log \left( \frac{y_i}{1 - \sum_{j=1}^{12} y_j} \right).
\end{align}
This mapping allows us to freely train a score-based model in an unconstrained Euclidean space, then transform and sample in the simplex domain where the probabilities must sum to at most one.

\subsubsection{Training and Sampling}

We train a score-based model $s_\theta(\mathbf{x}, t)$ in $\mathbb{R}^{12}$. Chess positions are represented with a softened one-hot encoding to avoid infinite values when applying the inverse transformation. The score model is trained using a Variance Preserving (VP) SDE \cite{song2020score}:
\begin{equation}
\mathrm{d}\mathbf{X} = -\frac{1}{2} \beta(t) \mathbf{X} \, \mathrm{d}t + \sqrt{\beta(t)} \, \mathrm{d}\mathbf{W},
\end{equation}
where \(\beta(t)\) is the noise schedule. During inference, we apply the reverse-time Itô lemma to perform diffusion directly in the projected simplex. By leveraging the learned scores in the unconstrained space and the known transformation $\phi$, we obtain the appropriate transformed SDE coefficients (derived in Appendix~\ref{appendix:chess_derivations}) and sample valid chess positions on the simplex.

\subsubsection{Controlling Piece Density}

The transformed reverse drift term, $\hat{\mathbf{f}}(\mathbf{Y}, t)$, interacts with the simplex geometry to influence piece distributions. To gain intuitive control, we introduce a scaling factor $w$:
\begin{equation}
\hat{\mathbf{f}}_w(\mathbf{Y}, t) = w \hat{\mathbf{f}}(\mathbf{Y}, t).
\end{equation}

Decreasing $w$ biases the distribution toward fewer pieces (increasing empty squares), while increasing it encourages more pieces. This  scalar parameter provides interpretable geometric control over the final configurations, allowing practitioners to shape the sampling process easily. The generated positions in Figure~\ref{fig:chess_samples} demonstrate how varying $w$ changes piece density without losing consistency in the underlying chess position representation.

\begin{figure*}[tb]  % Use [b] for bottom placement
    \centering
    \begin{minipage}[b]{0.23\textwidth}
        \centering
        \includegraphics[width=\textwidth]{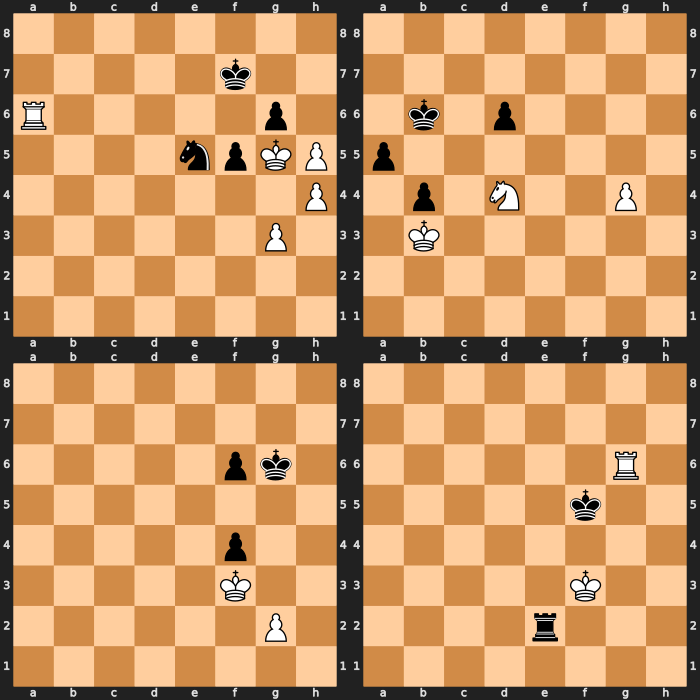}
        \caption{ \( w = 0.8 \).}
    \end{minipage}
    \hfill
    \begin{minipage}[b]{0.23\textwidth}
        \centering
        \includegraphics[width=\textwidth]{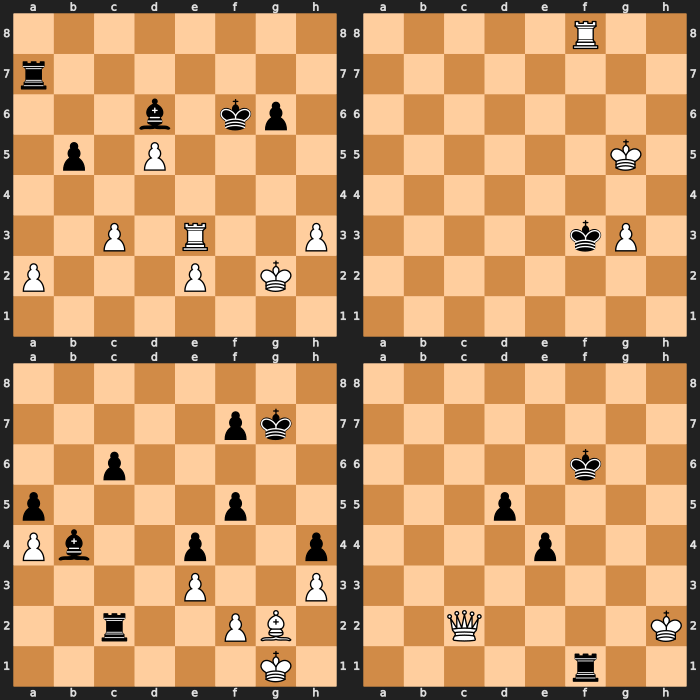}
        \caption{ \( w = 0.9 \).}
    \end{minipage}
    \hfill
    \begin{minipage}[b]{0.23\textwidth}
        \centering
        \includegraphics[width=\textwidth]{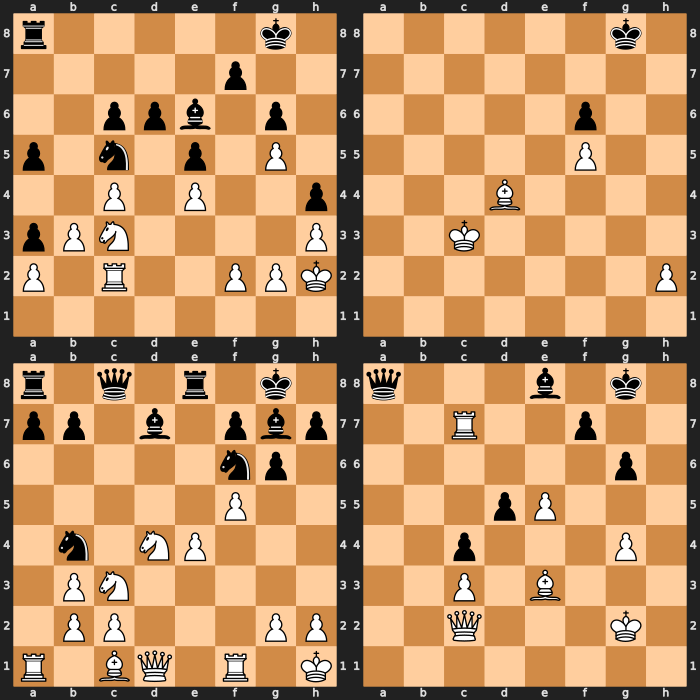}
        \caption{ \( w = 1.0\).}
    \end{minipage}
    \hfill
    \begin{minipage}[b]{0.23\textwidth}
        \centering
        \includegraphics[width=\textwidth]{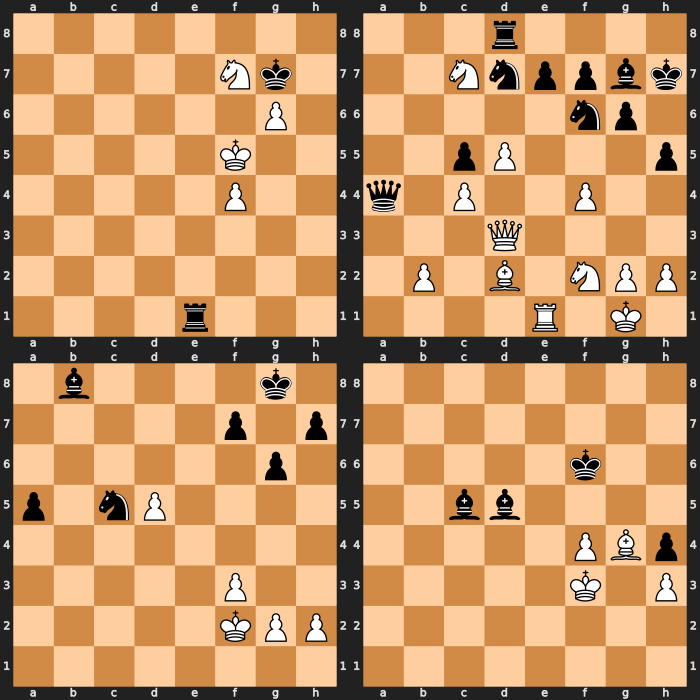}
        \caption{ \( w = 1.1\).}
    \end{minipage}
    \caption{Generated chess positions with different values of \( w \). Increasing \( w \) increases the number of pieces on the board.}
    \label{fig:chess_samples}
\end{figure*}

\subsection{Density Estimation with Generalized Sliced Score Matching}
\label{sec:gssm_experiment}

We evaluate Generalized Sliced Score Matching (GSSM) on density estimation tasks using deep kernel exponential families (DKEFs), comparing its performance against traditional score matching (SM) and sliced score matching (SSM) \cite{song2019slicedscorematchingscalable}. We also introduce and evaluate a variance-reduced version of GSSM (GSSM-VR).

\subsubsection{Choice of Transformation}

For GSSM, we use a quadratic form as our nonlinear transformation:
\begin{equation}
\label{eq:v_quadratic}
v(\mathbf{x}) = \frac{1}{2}\mathbf{x}^\top \mathbf{A} \mathbf{x} + \mathbf{b}^\top \mathbf{x},
\end{equation}
where $\mathbf{A}$ is a random symmetric matrix and $\mathbf{b}$ is a random vector. The entries of $\mathbf{A}$ and $\mathbf{b}$ are drawn from zero-mean distributions with specified variances: $\sigma_1^2$ for diagonal entries of $\mathbf{A}$, $\sigma_2^2$ for off-diagonal entries, and $\sigma_3^2$ for entries of $\mathbf{b}$. The gradient of this transformation, which determines the direction of projection in GSSM, is:
\begin{equation}
\label{eq:v_gradient}
\nabla_{\mathbf{x}} v(\mathbf{x}) = \mathbf{A}\mathbf{x} + \mathbf{b}.
\end{equation}

\subsubsection{Variance Reduction}

Depending on the chosen transformation, we may reduce the variance of our estimator by analytically integrating out some randomness. For linear transformations (as in standard sliced score matching), integrating out the random directions yields the known variance-reduced SSM objective \cite{song2019slicedscorematchingscalable}:
\begin{equation}
\label{eq:ssm_vr}
\mathcal{L}_{\text{SSM-VR}}(s_\theta) = \mathbb{E}_{p_d} \left[ \frac{1}{2} \|s_\theta(\mathbf{X})\|^2 + \mathbb{E}_{p_\mathbf{v} } \left[ \mathbf{v}^\top \nabla_{\mathbf{x}} s_\theta(\mathbf{X}) \mathbf{v} \right] \right].
\end{equation}

Similarly, for the quadratic transformation, integrating out some of the randomness from $\mathbf{A}$ and $\mathbf{b}$ leads to a variance-reduced GSSM (GSSM-VR) objective:
\begin{equation}
    \label{eq:gssm_vr}
    \begin{aligned}
    \mathcal{L}_{\text{GSSM-VR}}(s_\theta) = \mathbb{E}_{p_d} \Big[
    &\frac{1}{2} L_1 + L_2 + \mathbb{E}_{\mathbf{A}, \mathbf{b}} \left[ L_3 \right] \Big],
    \end{aligned}
\end{equation}
where
\begin{align*}
L_1 &= (\sigma_1^2 - 2\sigma_2^2) \sum_{i} s_{\theta i}(\mathbf{X})^2 x_i^2  + \sigma_2^2 (\| \mathbf{X} \|^2 \| s_\theta(\mathbf{X}) \|^2 + (s_\theta(\mathbf{X})^\top \mathbf{X})^2) + \sigma_3^2 \| s_\theta(\mathbf{X}) \|^2 \\
L_2 &= (2\sigma_1^2 + (n - 1) \sigma_2^2) s_\theta(\mathbf{X})^\top \mathbf{X} \\
L_3 &= (\mathbf{A}\mathbf{X} + \mathbf{b})^\top \nabla_{\mathbf{x}} s_\theta(\mathbf{X}) (\mathbf{A}\mathbf{X} + \mathbf{b})
\end{align*}

The full derivation for the quadratic case is provided in Appendix~\ref{appendix:quadratic_gssm}.

\subsubsection{Score Function Representation in DKEF Models}

We follow \cite{song2019slicedscorematchingscalable, wenliang2021learningdeepkernelsexponential} and use DKEF models to evaluate different score matching methods. A DKEF approximates an unnormalized log density as:
\begin{equation}
\log \tilde{p}_f(\mathbf{x}) = f(\mathbf{x}) + \log q_0(\mathbf{x}), \quad
f(\mathbf{x}) = \sum_{l=1}^L \alpha_l k(\mathbf{x}, \mathbf{z}_l),
\end{equation}
where $k$ is a mixture of Gaussian kernels centered at learned inducing points $\mathbf{z}_l$. The kernel features are defined through a neural network, and we jointly optimize the network parameters, kernel weights $\alpha_l$, and inducing points $\mathbf{z}_l$.

Additional details on the DKEF model architecture are provided in Appendix \ref{appendix:gssm_dkef}.

\subsubsection{Experimental Setup}

We evaluate SM, SSM, GSSM, and their variance-reduced variants on three UCI datasets (Parkinsons, RedWine, WhiteWine) \cite{asuncion2007uci}, following the training protocol and hyperparameters from \cite{song2019slicedscorematchingscalable}. For the quadratic transformation, we set $\sigma_3^2 = 1$ and choose the matrix variances as $\sigma_2^2 = 1/\sqrt{n}$ and $\sigma_1^2 = 2/\sqrt{n}$, where $n$ is the input dimension. The vector $\mathbf{b}$ is sampled from a Rademacher distribution, and entries of $\mathbf{A}$ are sampled from a Gaussian distribution with the given variances. This setup allows efficient sampling of $\mathbf{A}\mathbf{x}$ without explicitly constructing $\mathbf{A}$ (see Appendix~\ref{appendix:quadratic_gssm}). For SSM-VR and SSM, we sample the random projection vectors $\mathbf{v}$ from a Rademacher distribution, following \cite{song2019slicedscorematchingscalable}.

\subsubsection{Results}

\begin{figure*}[t]
    \begin{center}
    \includegraphics[width=0.9\linewidth]{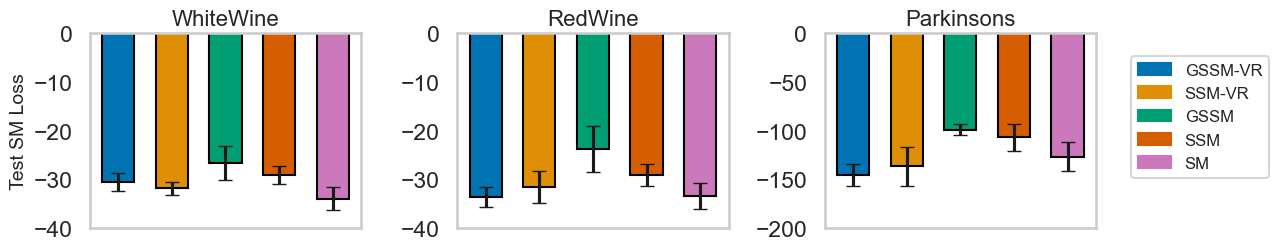}
    \vspace{-5mm}
    \end{center}
    \caption{Exact Score Matching loss across three UCI datasets; lower loss is better.}
    \label{fig:dkef2}
\end{figure*}

Figure~\ref{fig:dkef2} shows the score matching loss for different methods across the three datasets. Our variance-reduced GSSM (GSSM-VR) outperforms SSM-VR on RedWine and Parkinsons, while remaining competitive on WhiteWine. GSSM-VR also demonstrates lower variance in its estimates for RedWine and Parkinsons, indicating more stable optimization.

The non-variance-reduced GSSM performs less favorably than SSM under identical conditions, likely due to higher estimator variance impacting the optimization process. As in \cite{song2019slicedscorematchingscalable}, training was halted after 200 steps of no validation improvement, which proved limiting for GSSM due to its higher variance. Nevertheless, the key contribution of GSSM is its flexibility: extending score matching beyond linear projections allows tailoring the method to data distributions more effectively. Although the quadratic transformations used here are just one example, the results suggest further exploration into choosing more suitable transformation families for specific datasets or applications.

\section{Discussion}

This work explores a change of variables formula for score functions, providing a principled approach to adapt score-based methods across different spaces. By enabling transformations that accommodate specific data geometries and constraints, our result increases flexibility in model design and training. We demonstrate this through a reverse-time Itô lemma for transformed diffusion models and a generalized sliced score matching framework. Together, these contributions lay a mathematical foundation for transforming score-based algorithms, opening up new avenues for research and applications. 

\subsection{Related Work}

Our work on score function transformations intersects with several important areas of research in machine learning and statistics. In the realm of distribution transformations, our approach complements existing research such as normalizing flows \cite{rezende2016variational, dinh2017density}, which primarily focus on mapping densities rather than their scores. A  review by \cite{köthe2023reviewchangevariableformulas} highlights the importance and breadth of change of variable formulas in generative modeling. While these works deal mainly with density transformations, our focus on score function transformations offers a  perspective particularly useful in generative diffusion scenarios.

Recent advancements in score-based diffusion models have challenged the conventional use of Gaussian forward and reverse processes in generative modeling. Several studies have explored the application of diffusion models to constrained domains where standard Gaussian diffusion is inadequate. For instance, \cite{de2022riemannian} adapted diffusion processes to manifold structures. Other researchers have investigated the use of reflected Brownian motion for sampling in constrained domains \cite{lou2023reflected, fishman2024diffusion}, further expanding the applicability of diffusion models to complex geometries.

Our chess example, which explores diffusion on the probability simplex by transforming a VP SDE, aligns closely with the work of \cite{floto2023diffusion}. However, while they derive the score and reverse-time process entirely within the more complex simplex space, our approach offers a simpler model that is easier to train and sample from. In a similar vein to our paper, \cite{avdeyev2023dirichlet} employs a change of variables to train a score model with a Jacobi forward diffusion process for biological sequence generation. Our reverse-time Itô lemma generalizes and expands upon these efforts, providing a more comprehensive framework for score transformations.

The field has also witnessed the development of more exotic diffusion processes aimed at improving sample quality. Notable examples include the generative fractional diffusion models introduced by \cite{nobis2023generative}, which extend the concept of diffusion to fractional dynamics, and the critically-damped Langevin diffusion proposed by \cite{dockhorn2021score} for score-based generative modeling, offering enhanced sampling efficiency. The score change of variables framework can potentially be applied to sampling on these diverse domains and processes, offering significant benefits in cases like the simplex diffusion of \cite{floto2023diffusion}. While for some scenarios, no straightforward transformation may exist, we believe this avenue of research—finding transformations to simplify training and sampling in these complex domains—remains largely unexplored and holds considerable promise.

Score matching techniques have been instrumental in estimating and learning score functions, with various approaches developed over the years. Our work on generalized sliced score matching (GSSM) builds upon the original sliced score matching \cite{song2019slicedscorematchingscalable}, extending the concept to allow for more flexible, non-linear projections in the score estimation process. This approach relates to other established methods in the field, such as the foundational score matching \cite{hyvarinen2005estimation}, denoising score matching \cite{vincent2011connection}, and the Stein spectral gradient estimator \cite{shi2018spectral}. Each of these techniques offers unique advantages in different contexts, and the score change of variables framework presented in this paper may open avenues for developing new score matching methods. For instance, it might enable the adaptation of existing techniques to transformed spaces, potentially simplifying score estimation in complex domains or for intricate probability distributions.

 By examining score function transformations and their applications in diffusion models and score matching, this work aims to contribute to the ongoing dialogue in these fields. The framework presented here offers potential enhancements to a range of machine learning tasks involving score functions and generative modeling, bridging theoretical concepts with practical applications. As the field continues to evolve, further investigation into the interplay between these areas may yield additional insights and methodologies for addressing complex problems in score-based machine learning.

\subsection{Limitations}
While our framework provides a general approach for transforming score functions, several limitations warrant consideration:

\begin{itemize}
    \item \textbf{Computational and Numerical Challenges:} The computational cost of Jacobian and Hessian calculations, particularly in the reverse-time Itô lemma, can be significant for complex transformations where analytical solutions are not tractable. While we prove the existence of transformed score functions, our framework does not guarantee numerical stability or efficient convergence for sampling in transformed spaces.
    
    \item \textbf{Restrictive Transformation Requirements:} Our analysis is currently limited to smooth, invertible mappings, excluding a wide class of potentially useful non-differentiable or non-invertible transformations. Exploring the reverse-time Itô lemma with non-invertible transformations, akin to the original Itô lemma, represents an interesting direction for future work.
    
    \item \textbf{Limited Variance Reduction Techniques:} For Generalized Sliced Score Matching (GSSM), current variance reduction techniques are specific to certain families of transformations and often require individual derivation. This limits their general applicability and increases the computational burden for novel transformations.
    
    \item \textbf{Practical Utility:} While we demonstrate potential applications, such as controlling sampling dynamics in the chess example, the broader practical benefits of sampling directly in transformed spaces need further exploration. In many cases, it may be simpler to sample in the original space and transform the results.
\end{itemize}

\subsection{Future Directions}
Our work opens up several promising avenues for future research:

\begin{itemize}
    \item \textbf{Expressive Transformations:} While we focused on analytically tractable mappings, exploring more expressive transformations, such as invertible neural networks, could offer greater flexibility and adaptability to complex data geometries. This data-driven approach could lead to significant performance improvements, albeit at increased computational cost.

    \item \textbf{Theoretical Insights into Diffusion Processes:} Deeper theoretical study is needed to understand the interplay between transformations and diffusion processes. Such insights could guide the selection of optimal transformations, challenging the prevailing assumption that the same space should be used for both training and sampling.

    \item \textbf{Inverse Problem Solving:} The reverse-time Itô lemma presents a promising direction for solving inverse problems. By leveraging the ability to sample in transformed spaces, we can potentially develop more effective methods for inverse problem solving in various domains, building upon recent work in diffusion-based inverse problems \cite{jacobsen2023cocogen, chung2022improving}.

    \item \textbf{Extension to Other Score-Based Methods:} Extending the score change of variables framework to other score-based methods, including variational inference \cite{ranganath2014black} and energy-based models \cite{song2021trainenergybasedmodels}, presents a compelling direction for future research. This could unlock new capabilities within these frameworks and broaden their applicability.

    \item \textbf{Adaptive Transformation Selection:} Developing methods to adaptively select or learn optimal transformations during the training or sampling process could significantly enhance the flexibility and performance of score-based models across diverse applications.
\end{itemize}

In summary, the theoretical foundations established here open the door to more versatile and effective  methodologies in both training of score models and sampling from generative diffusion models. We believe the change of variables perspective will advance the performance, stability, and interpretability of score-based methods across a wide range of applications.

%%%%%%%%%%%%%%%%%%%%%%%%%%%%%%%%%%%%%%%%%%%%%%%%%%%%%%%%%%%%%%%%%%%%%%%%%%%%%%%
%%%%%%%%%%%%%%%%%%%%%%%%%%%%%%%%%%%%%%%%%%%%%%%%%%%%%%%%%%%%%%%%%%%%%%%%%%%%%%%
% APPENDIX
%%%%%%%%%%%%%%%%%%%%%%%%%%%%%%%%%%%%%%%%%%%%%%%%%%%%%%%%%%%%%%%%%%%%%%%%%%%%%%%
%%%%%%%%%%%%%%%%%%%%%%%%%%%%%%%%%%%%%%%%%%%%%%%%%%%%%%%%%%%%%%%%%%%%%%%%%%%%%%%
\newpage
\appendix
\onecolumn
\appendix
\section{Proofs}
\label{appendix:proofs}

\subsection[Proof of Theorem: Score Change of Variables in Rn]{Proof of Theorem \ref{thm:multi_dimensional}: Score Change of Variables in \(\mathbb{R}^n\)}
\begin{proof}
\label{appendix:Proof_of_Theorem_1_Multi}
The probability density function \( q(\mathbf{y}) \) of \( \mathbf{Y} \) is related to \( p(\mathbf{x}) \) by the change of variables formula:
\[
q(\mathbf{y}) = p(\phi^{-1}(\mathbf{y})) \left| \det\left( \mathbf{J}_{\phi^{-1}}(\mathbf{y}) \right) \right|.
\]

Taking the gradient of the logarithm of \( q(\mathbf{y}) \), we obtain:
\[
\nabla_{\mathbf{y}} \log q(\mathbf{y}) = \nabla_{\mathbf{y}} \log p(\phi^{-1}(\mathbf{y})) + \nabla_{\mathbf{y}} \log \left| \det\left( \mathbf{J}_{\phi^{-1}}(\mathbf{y}) \right) \right|.
\]

Applying the chain rule to the first term yields:
\[
\nabla_{\mathbf{y}} \log p(\phi^{-1}(\mathbf{y})) = \mathbf{J}_{\phi^{-1}}(\mathbf{y})^\top \cdot \nabla_{\mathbf{x}} \log p(\mathbf{x}) \bigg|_{\mathbf{x}=\phi^{-1}(\mathbf{y})}.
\]

For the second term, we have:
\begin{align}
\nabla_{\mathbf{y}} \log \left| \det\left( \mathbf{J}_{\phi^{-1}}(\mathbf{y}) \right) \right| &= \frac{\nabla_{\mathbf{y}} \det\left( \mathbf{J}_{\phi^{-1}}(\mathbf{y}) \right) }{ \det\left( \mathbf{J}_{\phi^{-1}}(\mathbf{y}) \right) } \\
&= \text{Tr}\left( \mathbf{J}_{\phi^{-1}}(\mathbf{y})^{-1} \cdot \nabla_{\mathbf{y}} \mathbf{J}_{\phi^{-1}}(\mathbf{y}) \right),
\end{align}
where in the second line we use Jacobi's formula for an invertible matrix.

By matrix calculus, we find:
\begin{align}
\text{Tr}\left( \mathbf{J}_{\phi^{-1}}(\mathbf{y})^{-1} \cdot \nabla_{\mathbf{y}} \mathbf{J}_{\phi^{-1}}(\mathbf{y}) \right) &= \sum_{i=1}^n \mathbf{H}_{\phi_i^{-1}} \cdot \frac{\partial \phi}{\partial x_i} \\
&= \nabla\cdot\ \mathbf{J}_{\phi^{-1}}(\phi(\mathbf{x}))^\top,
\end{align}
where \( \mathbf{H}_{\phi_i^{-1}} \) is the Hessian matrix of \( \phi_i^{-1} \).

Combining both terms, the score function is given by:
\[
\nabla_{\mathbf{y}} \log q(\mathbf{y}) = \mathbf{J}_{\phi^{-1}}(\mathbf{y})^\top \cdot \nabla_{\mathbf{x}} \log p(\mathbf{x}) + \nabla\cdot\ \left(\mathbf{J}_{\phi^{-1}}(\phi(\mathbf{x}))^\top\right)\bigg|_{\mathbf{x}=\phi^{-1}(\mathbf{y})} .
\]
\end{proof}

\begin{remark}
In this proof, we utilize the following result for an invertible function \( f: \mathbb{R}^n \to \mathbb{R}^n \) where \( f(\mathbf{x}) = \mathbf{y} \):
\begin{equation}
\nabla_{\mathbf{x}} \log \left| \det\left( \mathbf{J}_{f}(\mathbf{x}) \right) \right| = \nabla\cdot\ \mathbf{J}^\top_{f}(f^{-1}(\mathbf{y})).
\end{equation}

We can write:
\begin{equation}
\mathbf{J}_{f}(f^{-1}(\mathbf{y}))^\top = \left( \left(\mathbf{J}_{f^{-1}}(\mathbf{y}) \right)^{-1}\right)^\top= \frac{1}{\det\left( \mathbf{J}_{f^{-1}}(\mathbf{y}) \right)} \text{Cof}(\mathbf{J}_{f^{-1}}),
\end{equation}
where \(\text{Cof}(\mathbf{J}_{f^{-1}})\) is the cofactor matrix of \(\mathbf{J}_{f^{-1}}\).

For all differentiable \(f\), we have the fact \cite{evans2022partial}:
\begin{equation}
\nabla_{\mathbf{x}}\cdot \left(\text{Cof}(\mathbf{J}_{f}(\mathbf{x})) \right) = 0.
\end{equation}

Using this, we calculate:
\begin{align*}
\nabla_{\mathbf{x}} \log \left| \det\left( \mathbf{J}_{f}(\mathbf{x}) \right) \right| &= \nabla\cdot\ \mathbf{J}_{f}(f^{-1}(\mathbf{y}))^\top \\
&= \nabla\cdot\ \frac{1}{\det\left( \mathbf{J}_{f^{-1}}(\mathbf{y}) \right)} \text{Cof}(\mathbf{J}_{f^{-1}}(\mathbf{y})) \\
&= \frac{1}{\det\left( \mathbf{J}_{f^{-1}}(\mathbf{y}) \right)} \nabla\cdot\ \text{Cof}(\mathbf{J}_{f^{-1}}(\mathbf{y})) + \text{Cof}(\mathbf{J}_{f^{-1}}(\mathbf{y})) \cdot \nabla_{\mathbf{y}} \frac{1}{\det\left( \mathbf{J}_{f^{-1}}(\mathbf{y}) \right)} \\
&= 0 - \frac{1}{\det\left( \mathbf{J}_{f^{-1}}(\mathbf{y}) \right)^2} \text{Cof}(\mathbf{J}_{f^{-1}}(\mathbf{y})) \cdot \nabla_{\mathbf{y}} \det\left( \mathbf{J}_{f^{-1}}(\mathbf{y}) \right) \\
&= -\mathbf{J}^\top_{f} \cdot \nabla_{\mathbf{y}} \log \left| \det\left( \mathbf{J}_{f^{-1}}(\mathbf{y}) \right) \right|.
\end{align*}

From our above analysis, we have for an invertible and twice-differentiable function \( f \):
\begin{equation}
\label{grad_log_det_identity}
\nabla_{\mathbf{x}} \log \left| \det\left( \mathbf{J}_{f}(\mathbf{x}) \right) \right| = -\mathbf{J}_{f}^\top \cdot \nabla_{\mathbf{y}} \log \left| \det\left( \mathbf{J}_{f^{-1}}(\mathbf{y}) \right) \right|.
\end{equation}
\end{remark}

\subsection[Proof of Corollary: Dimension Reduction]{Proof of Corollary \ref{cor:score_change_variables_different_dimensions}: Dimension Reduction \(\mathbb{R}^n \to \mathbb{R}\)}
We assume \( v(\mathbf{x}): U \to U_1 \) where \(U \subset \mathbb{R}^n\) and \(U_1 \subset \mathbb{R}\) is a twice continuously differentiable transformation and that \( \|\nabla v(\mathbf{x})\|_2 \) is non-zero for all \( \mathbf{x} \) in \( U \). Let $y=v(\mathbf{x})$.
\begin{proof}
Consider the transformation
\begin{align*}  
\phi(\mathbf{x})= (x_1, \cdots, x_{i-1}, v(\mathbf{x}), x_{i+1}, \cdots, x_n) = \mathbf{z}.
\end{align*}
Suppose \( \frac{\partial v(\mathbf{x})}{\partial x_i} \) is non-zero.
By the change of variables formula, we have
\begin{align*}
\nabla_{\mathbf{x}} \log p(\mathbf{x}) &= \mathbf{J}^\top_{\phi(\mathbf{x})} \nabla_{\mathbf{z}} \log q(\mathbf{z}) + \nabla_{\mathbf{x}} \log |\det(\mathbf{J}_{\phi}(\mathbf{x}))| \bigg|_{\mathbf{z}=\phi(\mathbf{x})}\\
&= \mathbf{J}^\top_{\phi(\mathbf{x})} \nabla_{\mathbf{z}} \log q(\mathbf{z}) + \nabla_{\mathbf{x}} \log \left| \frac{\partial v(\mathbf{x})}{\partial x_i} \right| \bigg|_{\mathbf{z}=\phi(\mathbf{x})}.
\end{align*}
Then we have 
\begin{align*}
    \frac{\partial}{\partial x_i} \log p(\mathbf{x}) &= \frac{\partial v(\mathbf{x})}{\partial x_i} \frac{\partial}{\partial z_i} \log q(\mathbf{z}) + \frac{\partial}{\partial x_i} \log \left| \frac{\partial v(\mathbf{x})}{\partial x_i} \right|\\
    &= \frac{\partial v(\mathbf{x})}{\partial x_i} \left( \nabla_y \log q(y) + \frac{\partial}{\partial y} \log p((x_j)_{j \neq i} \mid y) \right) + \frac{\partial}{\partial x_i} \log \left| \frac{\partial v(\mathbf{x})}{\partial x_i} \right|.
\end{align*}
Rearranging, we see
\begin{align}
\label{intermediate_eqn}
   \left(\frac{\partial v(\mathbf{x})}{\partial x_i}\right)^2 \nabla_y \log q(y) &=  \frac{\partial v(\mathbf{x})}{\partial x_i} \frac{\partial}{\partial x_i} \log p(\mathbf{x}) \nonumber\\
   \quad &- \left(\frac{\partial v(\mathbf{x})}{\partial x_i}\right)^2 \frac{\partial}{\partial y} \log p((x_j)_{j \neq i} \mid y) - \frac{\partial^2 v(\mathbf{x})}{\partial x_i^2}.
\end{align}
Since \ref{intermediate_eqn} is true for all $i$ where \( \frac{\partial v(\mathbf{x})}{\partial x_i} \) is non-zero and trivially true where \( \frac{\partial v(\mathbf{x})}{\partial x_i}=0 \), we have
\begin{align*}
\|\nabla v(\mathbf{x})\|_2^2 \nabla_y \log q(y) &= \sum_{i=1}^n \frac{\partial v(\mathbf{x})}{\partial x_i} \frac{\partial}{\partial x_i} \log p(\mathbf{x}) - \sum_{i=1}^n \left(\frac{\partial v(\mathbf{x})}{\partial x_i}\right)^2 \frac{\partial}{\partial y} \log p((x_j)_{j \neq i} \mid y) - \sum_{i=1}^n \frac{\partial^2 v(\mathbf{x})}{\partial x_i^2}\\
&= \nabla v(\mathbf{x})^\top \nabla_{\mathbf{x}} \log p(\mathbf{x}) - \Delta v(\mathbf{x}) - \sum_{i=1}^n \left(\frac{\partial v(\mathbf{x})}{\partial x_i}\right)^2 \frac{\partial}{\partial y} \log p((x_j)_{j \neq i} \mid y).
\end{align*}
Finally, assuming \( \|\nabla v(\mathbf{x})\|_2 \) is non-zero we find 
\begin{align*}
\nabla_y \log q(y) &= \frac{1}{\|\nabla v(\mathbf{x})\|_2^2} \left( \nabla v(\mathbf{x})^\top \nabla_{\mathbf{x}} \log p(\mathbf{x}) - \Delta v(\mathbf{x}) - \sum_{i=1}^n \left(\frac{\partial v(\mathbf{x})}{\partial x_i}\right)^2 \frac{\partial}{\partial y} \log p((x_j)_{j \neq i} \mid y) \right).
\end{align*}
\end{proof}

\subsection[Additional Result: Dimension Expansion]{Additional result: Dimension Expansion \(\mathbb{R} \to \mathbb{R}^n\)}

While not used in the main text, we present an additional change of variables formula for completeness.
\begin{theorem}[Change of Variables Formula \(\mathbb{R} \to \mathbb{R}^n\)]
\label{appendix:R_to_Rn}
Let \( v: \mathbb{R} \to \mathbb{R}^n \) be a differentiable transformation. Let \( p(x) \) be the probability density function of a random variable \( X \in \mathbb{R} \), and let \( q(\mathbf{y}) \) be the probability density function of the transformed random variable \( \mathbf{Y} = v(X) \in \mathbb{R}^n \). Assume that \( \frac{dv_i(x)}{dx} \) is non-zero for all \( x \in \mathbb{R} \) and for all \( i \). Then, the score function \( \nabla_{\mathbf{y}} \log q(\mathbf{y}) \) of \( \mathbf{Y} \) can be expressed as follows:
\begin{equation}
\nabla_{\mathbf{y}} \log q(\mathbf{y}) = \frac{1}{\left(\frac{dv(x)}{dx}\right)^2}\left(\frac{dv(x)}{dx} \nabla_x \log p(x) -\frac{d^2v(x)}{dx^2}\right),
\end{equation}
where \( \frac{dv(x)}{dx} \in \mathbb{R}^n\) is the vector of first derivatives and \( \frac{1}{(\frac{dv(x)}{dx})^2} \) denotes element-wise division by the squared components of \( \frac{dv(x)}{dx} \).
\end{theorem}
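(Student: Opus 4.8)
The plan is to recognize the $\mathbb{R}\to\mathbb{R}^n$ formula as the coordinatewise version of the scalar result and to invoke Corollary~\ref{cor:score_change_of_variables_1d} once per component of $v$. The first step is to observe that the hypothesis $\frac{dv_i(x)}{dx}\neq 0$ for every $x$ and every $i$ forces each component map $v_i:\mathbb{R}\to\mathbb{R}$ to be strictly monotone, hence a $C^2$ bijection onto its image admitting a twice-differentiable inverse $v_i^{-1}$. This is exactly the hypothesis needed to apply the scalar change of variables on each coordinate in isolation.

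The second step is to fix the interpretation of $q(\mathbf{y})$ and of $\nabla_{\mathbf{y}}\log q(\mathbf{y})$. Since $\mathbf{Y}=v(X)$ is supported on a one-dimensional curve in $\mathbb{R}^n$, it carries no Lebesgue density on $\mathbb{R}^n$; the meaningful objects are the marginal densities $q_i(y_i)$ of the coordinates $Y_i=v_i(X)$, and I would read the $i$-th entry of $\nabla_{\mathbf{y}}\log q(\mathbf{y})$ as the scalar score $\frac{d}{dy_i}\log q_i(y_i)$. For each fixed $i$ the coordinate $Y_i=v_i(X)$ is precisely a scalar transformation of $X$, so its marginal density is $q_i(y_i)=p\bigl(v_i^{-1}(y_i)\bigr)\,\lvert (v_i^{-1})'(y_i)\rvert$. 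On the support curve, the relation $\mathbf{y}=v(x)$ ties all coordinates to a single underlying $x$, so the common evaluation point $x=v_i^{-1}(y_i)$ is consistent across the components.

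The third step is the computation itself, which is a direct application of Corollary~\ref{cor:score_change_of_variables_1d} with $\phi=v_i$. For each $i$ this gives
\begin{equation*}
\frac{d}{dy_i}\log q_i(y_i) = \frac{1}{\left(\frac{dv_i(x)}{dx}\right)^2}\left(\frac{dv_i(x)}{dx}\,\nabla_x\log p(x) - \frac{d^2 v_i(x)}{dx^2}\right),
\end{equation*}
evaluated at $x=v_i^{-1}(y_i)$. Stacking these $n$ scalar identities into a vector and using the element-wise squaring and division convention stated in the theorem recovers exactly the claimed formula.

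The step I expect to require the most care is not the computation, which is a routine componentwise invocation of the scalar corollary, but the conceptual one: justifying that the degenerate, curve-supported law of $\mathbf{Y}$ admits a sensible coordinatewise notion of score and that interpreting $\nabla_{\mathbf{y}}\log q$ through the marginals $q_i$ is the correct reading that makes the formula meaningful. Once this convention is fixed, the remainder follows immediately from Corollary~\ref{cor:score_change_of_variables_1d}.
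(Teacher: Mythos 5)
Your proof is correct and takes essentially the same route as the paper: apply Corollary~\ref{cor:score_change_of_variables_1d} to each component $v_i$ (invertible since its derivative never vanishes) and stack the resulting scalar scores into a vector. Your explicit justification that the curve-supported law of $\mathbf{Y}$ must be interpreted through its coordinate marginals $q_i(y_i)$ is a point the paper leaves implicit, but the underlying argument is identical.
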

\begin{proof}
Consider the transformation \( v: \mathbb{R} \to \mathbb{R}^n \) defined by \( \mathbf{y} = v(x) \) where \( v \) is differentiable, and each component \( v_i(x) \) of \( v \) has a non-zero derivative \( \frac{dv_i(x)}{dx} \). For each \( i \), \( v_i \) acts essentially as a scalar transformation from \( \mathbb{R} \to \mathbb{R} \), making \( v_i^{-1} \) well defined.

Applying corollary \ref{cor:score_change_of_variables_1d} to each component \( v_i \) independently, we find the score function for each transformed component \( y_i = v_i(x) \) of the random variable \( \mathbf{Y} \):
\[
\nabla_{y_i} \log q(y_i) = \frac{1}{\left(\frac{dv_i(x)}{dx}\right)^2} \left( \frac{dv_i(x)}{dx} \nabla_x \log p(x) - \frac{d^2 v_i(x)}{dx^2} \right).
\]

For the multidimensional variable \( \mathbf{Y} = v(X) \), the score function \( \nabla_{\mathbf{y}} \log q(\mathbf{y}) \) can be composed by stacking the gradients of each component \( \log q_i(y_i) \), yielding:
\[
\nabla_{\mathbf{y}} \log q(\mathbf{y}) = \left[ \nabla_{y_1} \log q(y_1), \dots, \nabla_{y_n} \log q(y_n) \right]^\top.
\]

Substituting the expression for each \( \nabla_{y_i} \log q(y_i) \) into this vector gives:
\begin{align*}
\nabla_{\mathbf{y}} \log q(\mathbf{y}) = \biggl[ 
    &\frac{1}{\left(\frac{dv_1(x)}{dx}\right)^2} \left( \frac{dv_1(x)}{dx} \nabla_x \log p(x) - \frac{d^2 v_1(x)}{dx^2} \right), \\
    &\ldots, \\
    &\frac{1}{\left(\frac{dv_n(x)}{dx}\right)^2} \left( \frac{dv_n(x)}{dx} \nabla_x \log p(x) - \frac{d^2 v_n(x)}{dx^2} \right) 
\biggr]^\top.
\end{align*}

Combining these component-wise expressions, we obtain:
\[
\nabla_{\mathbf{y}} \log q(\mathbf{y}) = \frac{1}{\left(\frac{dv(x)}{dx}\right)^2}\left(\frac{dv(x)}{dx} \nabla_x \log p(x) - \frac{d^2 v(x)}{dx^2}\right),
\]
where \( \frac{dv(x)}{dx} \in \mathbb{R}^n\) is the vector of first derivatives \( [\frac{dv_1(x)}{dx}, \ldots, \frac{dv_n(x)}{dx}]^\top \), \( \frac{d^2v(x)}{dx^2} \in \mathbb{R}^n\) is the vector of second derivatives, and all operations (division, multiplication) are performed element-wise.
\end{proof}

\section{Derivation of the Reverse-Time Itô Lemma}
\label{appendix:reverse_time_ito_lemma}

This section provides a  derivation of the reverse-time Itô lemma, which establishes the SDE governing the time-reversed transformed process. We consider two complementary approaches: 
\begin{enumerate}
    \item \textbf{Transform-then-Reverse:} We first transform the forward process using Itô's lemma and then apply the time-reversal formula to the transformed process.
    \item \textbf{Reverse-then-Transform:} We first construct the reverse-time SDE of the original process and then transform it using Itô's lemma.
\end{enumerate}

Both approaches lead to the same result, demonstrating the path-wise equivalence:
\[
\text{Reverse}(\phi(\mathbf{X}_t, t)) \equiv \phi(\text{Reverse}(\mathbf{X}_t), t).
\]
We also show  the reverse-time Wiener process defined in \cite{anderson1982reverse} is invariant under invertible  transformation.

\subsection{Proof of Reverse-Time Itô Lemma via Transform-then-Reverse}
\label{app:reverse_ito_proof_explicit}

Consider the forward process $\mathbf{X}$ governed by:
\begin{equation}
\mathrm{d}\mathbf{X} = \mathbf{f}(\mathbf{X},t)\mathrm{d}t + \mathbf{G}(\mathbf{X},t)\mathrm{d}\mathbf{W}
\label{eq:forward_sde_app_explicit}
\end{equation}
and its transformation $\mathbf{Y} = \phi(\mathbf{X},t)$ where $\phi: \mathbb{R}^n \times [0,T] \to \mathbb{R}^n$ is a bijective $C^{2,1}$ transformation.

By Itô's Lemma, the transformed process $\mathbf{Y}$ satisfies:
\begin{equation}
\mathrm{d}\mathbf{Y} = \tilde{\mathbf{f}}(\mathbf{Y},t)\mathrm{d}t + \tilde{\mathbf{G}}(\mathbf{Y},t)\mathrm{d}\mathbf{W}
\label{eq:transformed_forward_app_explicit}
\end{equation}
where for $\mathbf{x} = \phi^{-1}(\mathbf{y},t)$:
\begin{align}
\tilde{\mathbf{f}}(\mathbf{y},t) &= \frac{\partial \phi(\mathbf{x},t)}{\partial t} + \mathbf{J}_\phi(\mathbf{x},t)\mathbf{f}(\mathbf{x},t)  + \frac{1}{2} \text{Tr}\left[ \mathbf{G}(\mathbf{x}, t)^\top \mathbf{H}_{\phi}(\mathbf{x}, t) \mathbf{G}(\mathbf{x}, t)\right]\label{eq:transformed_drift_app_explicit} \\
\tilde{\mathbf{G}}(\mathbf{y},t) &= \mathbf{J}_\phi(\mathbf{x},t)\mathbf{G}(\mathbf{x},t) \label{eq:transformed_diffusion_app_explicit}
\end{align}

To obtain the reverse-time SDE for $\mathbf{Y}$, we apply Anderson's time-reversal formula (Theorem 4.1 in \cite{anderson1982reverse}) to the transformed forward SDE \eqref{eq:transformed_forward_app_explicit}. This yields:

\begin{equation}
\mathrm{d}\bar{\mathbf{Y}} = \left[\tilde{\mathbf{f}}(\bar{\mathbf{Y}},t) - \tilde{\mathbf{G}}(\bar{\mathbf{Y}},t)\tilde{\mathbf{G}}(\bar{\mathbf{Y}},t)^\top\nabla_{\mathbf{y}}\log q_t(\bar{\mathbf{Y}}) - \nabla_{\mathbf{y}}\cdot(\tilde{\mathbf{G}}(\bar{\mathbf{Y}},t)\tilde{\mathbf{G}}(\bar{\mathbf{Y}},t)^\top)\right]\mathrm{d}t + \tilde{\mathbf{G}}(\bar{\mathbf{Y}},t)\mathrm{d}\bar{\mathbf{W}}
\label{eq:raw_reverse_app_explicit}
\end{equation}
where $\mathrm{d}\bar{\mathbf{W}}$ is the infinitesimal increment of a reverse-time Wiener process, and $q_t(\mathbf{y})$ is the probability density of $\mathbf{Y}$.

Using our score change of variables formula (Theorem \ref{thm:multi_dimensional}), we express the score $\nabla_{\mathbf{y}}\log q_t(\mathbf{y})$ in terms of the score in the original space:
\begin{align}
\nabla_{\mathbf{y}}\log q_t(\mathbf{y}) &= \mathbf{J}_{\phi^{-1}}(\mathbf{y},t)^\top\nabla_{\mathbf{x}}\log p_t(\phi^{-1}(\mathbf{y},t)) + \nabla_{\mathbf{x}}\cdot\left(\mathbf{J}_{\phi^{-1}}(\phi(\mathbf{x},t),t)^\top\right)\Big|_{\mathbf{x}=\phi^{-1}(\mathbf{y},t)}
\label{eq:score_transform_app_explicit}
\end{align}

We now substitute \eqref{eq:score_transform_app_explicit} into \eqref{eq:raw_reverse_app_explicit} with $\mathbf{x} = \phi^{-1}(\mathbf{y},t)$ and analyze each term:

\begin{align*}
\mathrm{d}\bar{\mathbf{Y}} &= \left[\tilde{\mathbf{f}}(\bar{\mathbf{Y}},t) - \underbrace{\tilde{\mathbf{G}}(\bar{\mathbf{Y}},t)\tilde{\mathbf{G}}(\bar{\mathbf{Y}},t)^\top\mathbf{J}_{\phi^{-1}}(\bar{\mathbf{Y}},t)^\top\nabla_{\mathbf{x}}\log p_t(\mathbf{x})}_{\text{Term A}} \right. \\
&\quad \left. - \underbrace{\tilde{\mathbf{G}}(\bar{\mathbf{Y}},t)\tilde{\mathbf{G}}(\bar{\mathbf{Y}},t)^\top\nabla_{\mathbf{x}}\cdot\left(\mathbf{J}_{\phi^{-1}}(\phi(\bar{\mathbf{X}},t),t)^\top\right)}_{\text{Term B}} - \underbrace{\nabla_{\mathbf{y}}\cdot(\tilde{\mathbf{G}}(\bar{\mathbf{Y}},t)\tilde{\mathbf{G}}(\bar{\mathbf{Y}},t)^\top)}_{\text{Term C}}\right]\mathrm{d}t + \tilde{\mathbf{G}}(\bar{\mathbf{Y}},t)\mathrm{d}\bar{\mathbf{W}}
\end{align*}

\textbf{Term A:} Using $\mathbf{J}_{\phi^{-1}}(\mathbf{y},t) = [\mathbf{J}_\phi(\mathbf{x},t)]^{-1}$ and \eqref{eq:transformed_diffusion_app_explicit}, we simplify Term A:
\begin{align*}
\text{Term A} &= \mathbf{J}_\phi(\mathbf{x},t)\mathbf{G}(\mathbf{x},t)\mathbf{G}(\mathbf{x},t)^\top\mathbf{J}_\phi(\mathbf{x},t)^\top[\mathbf{J}_\phi(\mathbf{x},t)^{-\top}]\nabla_{\mathbf{x}}\log p_t(\mathbf{x}) \\
&= \mathbf{J}_\phi(\mathbf{x},t)\mathbf{G}(\mathbf{x},t)\mathbf{G}(\mathbf{x},t)^\top\nabla_{\mathbf{x}}\log p_t(\mathbf{x})
\end{align*}

\textbf{Term B:} Using identity \eqref{grad_log_det_identity}, we rewrite Term B:
\begin{align*}
\text{Term B} &= \mathbf{J}_\phi(\mathbf{x},t)\mathbf{G}(\mathbf{x},t)\mathbf{G}(\mathbf{x},t)^\top\mathbf{J}_\phi(\mathbf{x},t)^\top\nabla_{\mathbf{x}}\cdot\left(\mathbf{J}_{\phi^{-1}}(\phi(\mathbf{x},t),t)^\top\right) \\
&= -\mathbf{J}_\phi(\mathbf{x},t)\mathbf{G}(\mathbf{x},t)\mathbf{G}(\mathbf{x},t)^\top\nabla_{\mathbf{y}}\cdot\left(\mathbf{J}_{\phi}(\phi^{-1}(\mathbf{y},t),t)^\top\right)
\end{align*}

\textbf{Term C:} Applying the divergence operator to the product $\tilde{\mathbf{G}}(\mathbf{y}, t)\tilde{\mathbf{G}}(\mathbf{y}, t)^\top$ yields:
\begin{align*}
\text{Term C} &= \nabla_{\mathbf{y}} \cdot \bigl(\tilde{\mathbf{G}}(\mathbf{y}, t)\tilde{\mathbf{G}}(\mathbf{y}, t)^\top \bigr)\\
&=\nabla_{\mathbf{y}} [\mathbf{J}_{\phi}(\phi^{-1}(\mathbf{y},t), t) \mathbf{G}(\phi^{-1}(\mathbf{y},t), t)\mathbf{G}(\phi^{-1}(\mathbf{y},t), t)^\top] : [\mathbf{J}_{\phi}(\phi^{-1}(\mathbf{y},t), t)]^T\\
&\quad + \mathbf{J}_{\phi}(\mathbf{x}, t) \mathbf{G}(\mathbf{x}, t)\mathbf{G}(\mathbf{x}, t)^\top \nabla_{\mathbf{y}}\cdot\left(\mathbf{J}_{\phi}(\phi^{-1}(\mathbf{y}, t), t)^\top\right)
\end{align*}
where here and throughout the paper we use the notation $\mathbf{A}:\mathbf{B}$ to denote a double contraction between a third-order tensor $\mathbf{A}$ and a matrix $\mathbf{B}$. Specifically, if $\mathbf{A}$ has elements $A_{ijk}$ and $\mathbf{B}$ has elements $B_{kl}$, then $\mathbf{A}:\mathbf{B}$ results in a vector $\mathbf{C}$ with elements $C_i$ defined as:
\begin{equation}
C_i = \sum_{j,k} A_{ijk} B_{jk}
\end{equation}

Combining Terms B and C, we get:
\begin{align*}
\text{Term B} + \text{Term C} &= \nabla_{\mathbf{y}} [\mathbf{J}_{\phi}(\phi^{-1}(\mathbf{y},t), t) \mathbf{G}(\phi^{-1}(\mathbf{y},t), t)\mathbf{G}(\phi^{-1}(\mathbf{y},t), t)^\top] : [\mathbf{J}_{\phi}(\phi^{-1}(\mathbf{y},t), t)]^T\\
&\quad + \mathbf{J}_{\phi}(\mathbf{x}, t) \mathbf{G}(\mathbf{x}, t)\mathbf{G}(\mathbf{x}, t)^\top \nabla_{\mathbf{y}}\cdot\left(\mathbf{J}_{\phi}(\phi^{-1}(\mathbf{y}, t), t)^\top\right)\\
&\quad -\mathbf{J}_\phi(\mathbf{x},t)\mathbf{G}(\mathbf{x},t)\mathbf{G}(\mathbf{x},t)^\top\nabla_{\mathbf{y}}\cdot\left(\mathbf{J}_{\phi}(\phi^{-1}(\mathbf{y},t),t)^\top\right)\\
&=\nabla_{\mathbf{y}} [\mathbf{J}_{\phi}(\phi^{-1}(\mathbf{y},t), t) \mathbf{G}(\phi^{-1}(\mathbf{y},t), t)\mathbf{G}(\phi^{-1}(\mathbf{y},t), t)^\top] : [\mathbf{J}_{\phi}(\phi^{-1}(\mathbf{y},t), t)]^T
\end{align*}

To further simplify this expression, we use the following lemma:

\textbf{Lemma:}
\begin{align*}
&\quad \nabla_{\mathbf{y}} [\mathbf{J}_{\phi}(\phi^{-1}(\mathbf{y},t), t) \mathbf{G}(\phi^{-1}(\mathbf{y},t), t)\mathbf{G}(\phi^{-1}(\mathbf{y},t), t)^\top] : [\mathbf{J}_\phi(\phi^{-1}(\mathbf{y},t), t)]^T\\
&=\text{Tr}\left[ \mathbf{G}(\mathbf{x}, t)^\top \mathbf{H}_{\phi}(\mathbf{x}, t) \mathbf{G}(\mathbf{x}, t) \right]+ \mathbf{J}_{\phi}(\mathbf{x}, t)\nabla_{\mathbf{x}} \cdot \left(\mathbf{G}(\mathbf{x}, t)\mathbf{G}(\mathbf{x}, t)^\top\right)
\end{align*}
\textit{Proof:} See Appendix~\ref{app:lemma_proof}.

Applying the lemma, we obtain:
\begin{align*}
\text{Term B} + \text{Term C} &= \text{Tr}\left[ \mathbf{G}(\mathbf{x}, t)^\top \mathbf{H}_{\phi}(\mathbf{x}, t) \mathbf{G}(\mathbf{x}, t) \right]+ \mathbf{J}_{\phi}(\mathbf{x}, t)\nabla_{\mathbf{x}} \cdot \left(\mathbf{G}(\mathbf{x}, t)\mathbf{G}(\mathbf{x}, t)^\top\right)
\end{align*}

Substituting Terms A, B, and C back into the reverse SDE and using the definition of $\tilde{\mathbf{f}}(\mathbf{y}, t)$ from \eqref{eq:transformed_drift_app_explicit}, we obtain the reverse drift for $\bar{\mathbf{Y}}_t$:
\begin{align*}
\hat{\mathbf{f}}(\mathbf{y},t) &= \tilde{\mathbf{f}}(\mathbf{y},t) - \left(\text{Term A + Term B + Term C}\right)\\
&=\frac{\partial \phi(\mathbf{x}, t)}{\partial t} + \mathbf{J}_{\phi}(\mathbf{x}, t) \mathbf{f}(\mathbf{x}, t)+ \frac{1}{2} \text{Tr}\left[ \mathbf{G}(\mathbf{x}, t)^\top \mathbf{H}_{\phi}(\mathbf{x}, t) \mathbf{G}(\mathbf{x}, t) \right]\\
&\quad-\big(\mathbf{J}_\phi(\mathbf{x},t)\mathbf{G}(\mathbf{x},t)\mathbf{G}(\mathbf{x},t)^\top\nabla_{\mathbf{x}}\log p_t(\mathbf{x})+\text{Tr}\left[ \mathbf{G}(\mathbf{x}, t)^\top \mathbf{H}_{\phi}(\mathbf{x}, t) \mathbf{G}(\mathbf{x}, t) \right]\\
&\qquad + \mathbf{J}_{\phi}(\mathbf{x}, t)\nabla_{\mathbf{x}} \cdot \left(\mathbf{G}(\mathbf{x}, t)\mathbf{G}(\mathbf{x}, t)^\top\right)\big)\\
&=\frac{\partial \phi(\mathbf{x}, t)}{\partial t}+\mathbf{J}_\phi(\mathbf{x},t)\left(\mathbf{f}(\mathbf{x}, t)-\mathbf{G}(\mathbf{x},t)\mathbf{G}(\mathbf{x},t)^\top\nabla_{\mathbf{x}}\log p_t(\mathbf{x})-\nabla_{\mathbf{x}} \cdot \left(\mathbf{G}(\mathbf{x}, t)\mathbf{G}(\mathbf{x}, t)^\top\right)\right)\\
&\quad -\frac{1}{2} \text{Tr}\left[ \mathbf{G}(\mathbf{x}, t)^\top \mathbf{H}_{\phi}(\mathbf{x}, t) \mathbf{G}(\mathbf{x}, t)\right]\\
&=\frac{\partial \phi(\mathbf{x}, t)}{\partial t} + \mathbf{J}_{\phi}(\mathbf{x}, t) \bar{\mathbf{f}}(\mathbf{x}, t) - \frac{1}{2} \text{Tr}\left[ \mathbf{G}(\mathbf{x}, t)^\top \mathbf{H}_{\phi}(\mathbf{x}, t) \mathbf{G}(\mathbf{x}, t) \right]
\end{align*}
where $\bar{\mathbf{f}}(\mathbf{x}, t) = \mathbf{f}(\mathbf{x}, t)-\mathbf{G}(\mathbf{x},t)\mathbf{G}(\mathbf{x},t)^\top\nabla_{\mathbf{x}}\log p_t(\mathbf{x})-\nabla_{\mathbf{x}} \cdot \left(\mathbf{G}(\mathbf{x}, t)\mathbf{G}(\mathbf{x}, t)^\top\right)$ is the reverse drift of the original process $\mathbf{X}_t$.
We conclude
\begin{align}
\mathrm{d}\bar{\mathbf{Y}} =  \hat{\mathbf{f}}(\bar{\mathbf{Y}}, t)dt  + \tilde{\mathbf{G}}(\mathbf{\bar{Y}}, t)\mathrm{d}\bar{\mathbf{W}}(t).
\end{align}
which matches \eqref{eq:reverse_sde_Y_final}.

\subsection{Proof of Reverse-Time Itô Lemma via Reverse-then-Transform}
\label{app:reverse_then_transform}

Now, we derive the reverse-time SDE by first reversing the original process $\mathbf{X}$ and then applying the transformation $\phi$. Let $\bar{\mathbf{X}}$ denote the reverse process of $\mathbf{X}$, which satisfies:
\begin{align}
\mathrm{d}\bar{\mathbf{X}} &=\bar{\mathbf{f}}(\bar{\mathbf{X}},t) \mathrm{d}t+ \mathbf{G}(\bar{\mathbf{X}},t)\mathrm{d}\bar{\mathbf{W}}
\end{align}
where 
\begin{align}
\bar{\mathbf{f}}(\mathbf{x},t)&=\mathbf{f}(\mathbf{x},t) - \mathbf{G}(\mathbf{x},t)\mathbf{G}(\mathbf{x},t)^\top\nabla_{\mathbf{x}}\log p_t(\mathbf{x})  - \nabla_{\mathbf{x}}\cdot(\mathbf{G}(\mathbf{x},t)\mathbf{G}(\mathbf{x},t)^\top) \label{eq:reverse_sde_X_app} 
\end{align}
and $\mathrm{d}\bar{\mathbf{W}}$ is the infinitesimal increment of a reverse-time Wiener process. 

To apply Itô's lemma, we re-parameterize the reverse-time SDE using forward time $\tau = T-t$, which gives $\mathrm{d}\tau = -\mathrm{d}t$. The SDE in forward time becomes:
\begin{align}
\mathrm{d}\bar{\mathbf{X}} =-\bar{\mathbf{f}}(\bar{\mathbf{X}},\tau) \mathrm{d}\tau+ \mathbf{G}(\bar{\mathbf{X}},\tau)\mathrm{d}\bar{\mathbf{W}}
\end{align}

Now, let $\bar{\mathbf{Y}} = \phi(\bar{\mathbf{X}},\tau)$. Applying Itô's lemma to this transformation, we obtain:
\begin{align}
\mathrm{d}\bar{\mathbf{Y}} &= \left[\frac{\partial\phi(\bar{\mathbf{X}},\tau)}{\partial\tau} - \mathbf{J}_\phi(\bar{\mathbf{X}},\tau)\bar{\mathbf{f}}(\bar{\mathbf{X}},\tau) + \frac{1}{2}\mathrm{Tr}\left(\mathbf{G}(\bar{\mathbf{X}},\tau)^\top\mathbf{H}_\phi(\bar{\mathbf{X}},\tau)\mathbf{G}(\bar{\mathbf{X}},\tau)\right)\right]\mathrm{d}\tau \nonumber \\
&\quad + \mathbf{J}_\phi(\bar{\mathbf{X}},\tau)\mathbf{G}(\bar{\mathbf{X}},\tau)\mathrm{d}\bar{\mathbf{W}} \label{eq:transformed_reverse_raw_app}
\end{align}

Substituting back into the reverse time $t=T-\tau$, we get:
\begin{align}
\mathrm{d}\bar{\mathbf{Y}}&= \left[\frac{\partial\phi(\bar{\mathbf{X}},t)}{\partial t} + \mathbf{J}_\phi(\bar{\mathbf{X}},t)\bar{\mathbf{f}}(\bar{\mathbf{X}},t) - \frac{1}{2}\mathrm{Tr}\left(\mathbf{G}(\bar{\mathbf{X}},t)^\top\mathbf{H}_\phi(\bar{\mathbf{X}},t)\mathbf{G}(\bar{\mathbf{X}},t)\right)\right]\mathrm{d}t \nonumber \\
&\quad + \mathbf{J}_\phi(\bar{\mathbf{X}},t)\mathbf{G}(\bar{\mathbf{X}},t)\mathrm{d}\bar{\mathbf{W}} \label{eq:transformed_reverse_time_app}
\end{align}

This is the same result we obtained in Appendix~\ref{app:reverse_ito_proof_explicit} using the transform-then-reverse approach.

\subsection{Invariance of the Reverse-Time Wiener Process}
\label{app:wiener_invariance}

We show that the reverse-time Wiener process is invariant under a smooth transformation $\phi$, meaning that the same $\mathrm{d}\bar{\mathbf{W}}$ appears in both the reverse SDE for $\mathbf{X}$ and the reverse SDE for $\mathbf{Y} = \phi(\mathbf{X}, t)$.

Recall from Anderson \cite{anderson1982reverse} that the reverse-time Wiener process increment $\mathrm{d}\bar{\mathbf{W}}$ for the original process $\mathbf{X}$ is given by:
\begin{equation}
    \mathrm{d}\bar{\mathbf{W}} = \mathrm{d}\mathbf{W} + \frac{1}{p_t(\mathbf{x})} \nabla_{\mathbf{x}} \cdot \left[ p_t(\mathbf{x})  \mathbf{G}(\mathbf{x}, t)^\top \right] \mathrm{d}t.
    \label{eq:dw_tilde_x}
\end{equation}
Similarly, for the transformed process $\mathbf{Y}$, the reverse-time Wiener process increment is given by:
\begin{equation}
    \mathrm{d}\bar{\mathbf{W}} = \mathrm{d}\mathbf{W} + \frac{1}{q_t(\mathbf{y})} \nabla_{\mathbf{y}} \cdot \left[ q_t(\mathbf{y})  \tilde{\mathbf{G}}(\mathbf{y}, t)^\top \right] \mathrm{d}t.
    \label{eq:dw_tilde_y}
\end{equation}

We observe:

\begin{align*}
    \frac{1}{q_t(\mathbf{y})} \nabla_{\mathbf{y}} \cdot \left[  \tilde{\mathbf{G}}(\mathbf{y}, t)^\top q_t(\mathbf{y})\right]&=\tilde{\mathbf{G}}(\mathbf{y}, t)^\top \nabla_{\mathbf{y}}\log q_t(\mathbf{y})+\nabla_{\mathbf{y}} \cdot  \tilde{\mathbf{G}}(\mathbf{y}, t)^\top\\
    &=\tilde{\mathbf{G}}(\mathbf{y}, t)^\top \left(\mathbf{J}_{\phi^{-1}}(\mathbf{y}, t)^\top \nabla_{\mathbf{x}} \log p(\mathbf{x})+ \nabla_{\mathbf{x}}\cdot \left(\mathbf{J}_{\phi^{-1}}(\phi(\mathbf{x},t),t)^\top\right)\right)\\
    &\quad +\nabla_{\mathbf{y}} \cdot  \tilde{\mathbf{G}}(\mathbf{y}, t)^\top\bigg|_{\mathbf{x}=\phi^{-1}(\mathbf{y})}\\
    &=\mathbf{G}(\mathbf{x}, t)^\top\left(\nabla_{\mathbf{x}} \log p(\mathbf{x})- \nabla_{\mathbf{y}}\cdot \left(\mathbf{J}_{\phi}(\phi^{-1}(\mathbf{y}, t), t)^\top\right)\right)\\
    &\quad +\nabla_{\mathbf{y}} \cdot  \tilde{\mathbf{G}}(\mathbf{y}, t)^\top\bigg|_{\mathbf{x}=\phi^{-1}(\mathbf{y}, t)}
\end{align*}
Where we used the score change of variables \ref{thm:multi_dimensional} and in the last line we use the identity \eqref{grad_log_det_identity}: \begin{equation}\mathbf{J}_{\phi}(\mathbf{x}, t)^\top\nabla_{\mathbf{x}}\cdot \left(\mathbf{J}_{\phi^{-1}}(\phi(\mathbf{x},t),t)^\top\right)\bigg|_{\mathbf{x}=\phi^{-1}(\mathbf{y}, t)}=- \nabla_{\mathbf{y}}\cdot \left(\mathbf{J}_{\phi}(\phi^{-1}(\mathbf{y}, t), t)^\top\right)
\end{equation}

Now we have through matrix calculus identities:

\begin{align*}
    \nabla_{\mathbf{y}} \cdot  \tilde{\mathbf{G}}(\mathbf{y}, t)^\top&=\nabla_{\mathbf{y}}\mathbf{G}(\phi^{-1}(\mathbf{y}, t), t)^\top:\mathbf{J}_{\phi}(\mathbf{x}, t)^\top +  \mathbf{G}(\mathbf{x}, t)^\top\nabla_{\mathbf{y}}\cdot \mathbf{J}_{\phi}(\phi^{-1}(\mathbf{y}, t), t)^\top\bigg|_{\mathbf{x}=\phi^{-1}(\mathbf{y}, t)}\\
    &=\nabla_{\mathbf{x}}\mathbf{G}(\mathbf{x} , t)^\top\mathbf{J}_{\phi^{-1}}(\mathbf{y}, t):\mathbf{J}_{\phi}(\mathbf{x}, t)^\top +  \mathbf{G}(\mathbf{x}, t)^\top\nabla_{\mathbf{y}}\cdot \mathbf{J}_{\phi}(\phi^{-1}(\mathbf{y}, t), t)^\top\bigg|_{\mathbf{x}=\phi^{-1}(\mathbf{y}, t)}\\
    &=\nabla_{\mathbf{x}}\mathbf{G}(\mathbf{x} , t)^\top:\mathbf{I} +  \mathbf{G}(\mathbf{x}, t)^\top\nabla_{\mathbf{y}}\cdot \mathbf{J}_{\phi}(\phi^{-1}(\mathbf{y}, t), t)^\top\bigg|_{\mathbf{x}=\phi^{-1}(\mathbf{y}, t)}\\
    &=\nabla_{\mathbf{x}}\cdot \mathbf{G}(\mathbf{x} , t)^\top +  \mathbf{G}(\mathbf{x}, t)^\top\nabla_{\mathbf{y}}\cdot \mathbf{J}_{\phi}(\phi^{-1}(\mathbf{y}, t), t)^\top\bigg|_{\mathbf{x}=\phi^{-1}(\mathbf{y}, t)}
\end{align*}

Thus we find:
\begin{align*}
    \frac{1}{q_t(\mathbf{y})} \nabla_{\mathbf{y}} \cdot \left[  \tilde{\mathbf{G}}(\mathbf{y}, t)^\top q_t(\mathbf{y})\right]&=\mathbf{G}(\mathbf{x}, t)^\top\left(\nabla_{\mathbf{x}} \log p(\mathbf{x})- \nabla_{\mathbf{y}}\cdot \left(\mathbf{J}_{\phi}(\phi^{-1}(\mathbf{y}, t), t)^\top\right)\right)\\
    &\quad+\nabla_{\mathbf{y}} \cdot  \tilde{\mathbf{G}}(\mathbf{y}, t)^\top\bigg|_{\mathbf{x}=\phi^{-1}(\mathbf{y}, t)}\\
    &=\mathbf{G}(\mathbf{x}, t)^\top \nabla_{\mathbf{x}} \log p(\mathbf{x})+\nabla_{\mathbf{x}}\cdot \mathbf{G}(\mathbf{x} , t)^\top\bigg|_{\mathbf{x}=\phi^{-1}(\mathbf{y}, t)}\\
    &=\frac{1}{p_t(\mathbf{x})} \nabla_{\mathbf{x}} \cdot \left[ p_t(\mathbf{x})  \mathbf{G}(\mathbf{x}, t)^\top\right]\bigg|_{\mathbf{x}=\phi^{-1}(\mathbf{y}, t)}
\end{align*}

\subsection{Proof of Lemma}
\label{app:lemma_proof}

\textbf{Lemma:}
Let $\phi(\mathbf{x}, t)$ be a twice differentiable and invertible map, with $\mathbf{J}_{\phi}(\mathbf{x}, t)$ denoting its Jacobian and $\mathbf{H}_{\phi}(\mathbf{x}, t)$ its Hessian tensor. Let $\mathbf{G}(\mathbf{x}, t)$ be a matrix-valued function that depends smoothly on $\mathbf{x}$ and $t$. Define $\mathbf{y} = \phi(\mathbf{x}, t)$ and $\mathbf{x} = \phi^{-1}(\mathbf{y}, t)$. Then the following identity holds:
\begin{align*}
\nabla_{\mathbf{y}} \!\Big[\mathbf{J}_{\phi}\bigl(\phi^{-1}(\mathbf{y},t),t\bigr)\,\mathbf{G}\bigl(\phi^{-1}(\mathbf{y},t),t\bigr)\,\mathbf{G}\bigl(\phi^{-1}(\mathbf{y},t),t\bigr)^\top\Big]
\,:\,\bigl[\mathbf{J}_{\phi}\bigl(\phi^{-1}(\mathbf{y},t),t\bigr)\bigr]^\top \nonumber \\
= \mathrm{Tr}\Big[\mathbf{G}(\mathbf{x},t)^\top\,\mathbf{H}_{\phi}(\mathbf{x},t)\,\mathbf{G}(\mathbf{x},t)\Big]
+ \mathbf{J}_{\phi}(\mathbf{x},t)\;\nabla_{\mathbf{x}}\cdot\Big(\mathbf{G}(\mathbf{x},t)\,\mathbf{G}(\mathbf{x},t)^\top\Big)\Big|_{\mathbf{x}=\phi^{-1}(\mathbf{y},t)}.
\end{align*}

\begin{proof}
Define:
\[
\widetilde{\mathbf{A}}(\mathbf{x}, t) = \mathbf{J}_{\phi}(\mathbf{x}, t)\,\mathbf{G}(\mathbf{x}, t)\,\mathbf{G}(\mathbf{x}, t)^\top,
\]
and let:
\[
\mathbf{A}(\mathbf{y}, t) = \widetilde{\mathbf{A}}(\phi^{-1}(\mathbf{y}, t), t).
\]
We want to compute the double contraction $\nabla_{\mathbf{y}} \mathbf{A}(\mathbf{y}, t) : \bigl[\mathbf{J}_{\phi}(\phi^{-1}(\mathbf{y}, t), t)\bigr]^\top$.

First, we compute $\nabla_{\mathbf{y}} \mathbf{A}(\mathbf{y}, t)$ using the chain rule:
\begin{align}
\nabla_{\mathbf{y}} \mathbf{A}(\mathbf{y}, t) &= \nabla_{\mathbf{x}} \widetilde{\mathbf{A}}(\mathbf{x}, t) \cdot \mathbf{J}_{\phi^{-1}}(\mathbf{y}, t) \Big|_{\mathbf{x} = \phi^{-1}(\mathbf{y}, t)}
\end{align}
where $\mathbf{J}_{\phi^{-1}}(\mathbf{y}, t) = [\mathbf{J}_{\phi}(\mathbf{x}, t)]^{-1}$ is the Jacobian of the inverse transformation.

Now, consider the double contraction:
\begin{align}
\nabla_{\mathbf{y}} \mathbf{A}(\mathbf{y}, t) : \bigl[\mathbf{J}_{\phi}(\phi^{-1}(\mathbf{y}, t), t)\bigr]^\top &= \bigl(\nabla_{\mathbf{x}} \widetilde{\mathbf{A}}(\mathbf{x}, t) \cdot \mathbf{J}_{\phi^{-1}}(\mathbf{y}, t)\bigr) : \bigl[\mathbf{J}_{\phi}(\mathbf{x}, t)\bigr]^\top \Big|_{\mathbf{x} = \phi^{-1}(\mathbf{y}, t)} \nonumber \\
&= \nabla_{\mathbf{x}} \widetilde{\mathbf{A}}(\mathbf{x}, t) : \bigl(\mathbf{J}_{\phi^{-1}}(\mathbf{y}, t) \cdot [\mathbf{J}_{\phi}(\mathbf{x}, t)]\bigr) \Big|_{\mathbf{x} = \phi^{-1}(\mathbf{y}, t)} \nonumber \\
&=\nabla_{\mathbf{x}} \widetilde{\mathbf{A}}(\mathbf{x}, t) : \mathbf{I}\Big|_{\mathbf{x} = \phi^{-1}(\mathbf{y}, t)} \nonumber \\
&= \nabla_{\mathbf{x}} \cdot \widetilde{\mathbf{A}}(\mathbf{x}, t) \Big|_{\mathbf{x} = \phi^{-1}(\mathbf{y}, t)}. \label{eq:double_contraction_simplified}
\end{align}

Next, we substitute the definition of $\widetilde{\mathbf{A}}(\mathbf{x}, t)$ and apply the product rule for the divergence:
\begin{align}
\nabla_{\mathbf{x}} \cdot \widetilde{\mathbf{A}}(\mathbf{x}, t) &= \nabla_{\mathbf{x}} \cdot \bigl(\mathbf{J}_{\phi}(\mathbf{x}, t)\,\mathbf{G}(\mathbf{x}, t)\,\mathbf{G}(\mathbf{x}, t)^\top\bigr) \nonumber \\
&= (\nabla_{\mathbf{x}}\mathbf{J}_{\phi}(\mathbf{x}, t)):\bigl(\mathbf{G}(\mathbf{x}, t)\,\mathbf{G}(\mathbf{x}, t)^\top\bigr) + \mathbf{J}_{\phi}(\mathbf{x}, t) \cdot \nabla_{\mathbf{x}} \cdot \bigl(\mathbf{G}(\mathbf{x}, t)\,\mathbf{G}(\mathbf{x}, t)^\top\bigr). \label{eq:divergence_product_rule}
\end{align}

We can rewrite the first term on the right-hand side of \eqref{eq:divergence_product_rule} using the definition of the Hessian tensor $\mathbf{H}_{\phi}$:
\begin{align}
(\nabla_{\mathbf{x}}\mathbf{J}_{\phi}(\mathbf{x}, t)):\bigl(\mathbf{G}(\mathbf{x}, t)\,\mathbf{G}(\mathbf{x}, t)^\top\bigr) = \mathrm{Tr}\bigl[\mathbf{G}(\mathbf{x}, t)^\top \mathbf{H}_{\phi}(\mathbf{x}, t) \mathbf{G}(\mathbf{x}, t)\bigr].
\end{align}

Substituting this back into \eqref{eq:divergence_product_rule}, we get:
\begin{align}
\nabla_{\mathbf{x}} \cdot \widetilde{\mathbf{A}}(\mathbf{x}, t) = \mathrm{Tr}\bigl[\mathbf{G}(\mathbf{x}, t)^\top \mathbf{H}_{\phi}(\mathbf{x}, t) \mathbf{G}(\mathbf{x}, t)\bigr] + \mathbf{J}_{\phi}(\mathbf{x}, t) \cdot \nabla_{\mathbf{x}} \cdot \bigl(\mathbf{G}(\mathbf{x}, t)\,\mathbf{G}(\mathbf{x}, t)^\top\bigr).
\end{align}

Finally, substituting this into \eqref{eq:double_contraction_simplified} and evaluating at $\mathbf{x} = \phi^{-1}(\mathbf{y}, t)$, we arrive at the desired identity:
\begin{align}
\nabla_{\mathbf{y}} \mathbf{A}(\mathbf{y}, t) : \bigl[\mathbf{J}_{\phi}(\phi^{-1}(\mathbf{y}, t), t)\bigr]^\top &= \mathrm{Tr}\bigl[\mathbf{G}(\mathbf{x}, t)^\top \mathbf{H}_{\phi}(\mathbf{x}, t) \mathbf{G}(\mathbf{x}, t)\bigr] \nonumber \\
&\quad + \mathbf{J}_{\phi}(\mathbf{x}, t) \cdot \nabla_{\mathbf{x}} \cdot \bigl(\mathbf{G}(\mathbf{x}, t)\,\mathbf{G}(\mathbf{x}, t)^\top\bigr) \Big|_{\mathbf{x} = \phi^{-1}(\mathbf{y}, t)}.
\end{align}
\end{proof}

\subsection{Derivation of Weighted Loss Function}
\label{appendix:training}

We show that minimizing the weighted loss \eqref{eq:weighted_loss_multidim} yields a score network that correctly approximates the transformed score. Let $q_t(\mathbf{y})$ be the density of the transformed variable $\mathbf{Y} = \phi(\mathbf{X}, t)$.

First, recall from Theorem \ref{thm:multi_dimensional} that:
\begin{equation}
\nabla_{\mathbf{y}} \log q_t(\mathbf{y}) = \mathbf{J}_{\phi^{-1}}(\mathbf{y}, t)^\top \nabla_{\mathbf{x}} \log p_t(\mathbf{x}) + \nabla_{\mathbf{x}}\cdot \left(\mathbf{J}_{\phi^{-1}}(\phi(\mathbf{x}, t),  t)^\top\right)\big|_{\mathbf{x}=\phi^{-1}(\mathbf{y}, t)}.
\end{equation}

Consider the standard denoising score matching objective in the transformed space:
\begin{equation}
\mathbb{E}_{t} \left[ \lambda(t) \, \mathbb{E}_{\mathbf{Y}(0)} \, \mathbb{E}_{\mathbf{Y} | \mathbf{Y}(0)} \left\| \tilde{s}_\theta(\mathbf{Y}, t) - \nabla_{\mathbf{y}} \log q_{0t}(\mathbf{Y} | \mathbf{Y}(0)) \right\|^2 \right],
\end{equation}
where $\tilde{s}_\theta$ is a score network in the transformed space and  $t$ is sampled uniformly from $[0,1]$.

For this to be equivalent to training in the original space, we want:
\begin{equation}
\tilde{s}_\theta(\mathbf{y}, t) = \mathbf{J}_{\phi^{-1}}(\mathbf{y}, t)^\top s_\theta(\phi^{-1}(\mathbf{y}, t), t) + \nabla_{\mathbf{x}}\cdot \left(\mathbf{J}_{\phi^{-1}}(\phi(\mathbf{x}, t), t) ^\top\right),
\end{equation}
where $s_\theta$ is trained in the original space.

Substituting this relationship and changing variables to $\mathbf{X}$:
\begin{align*}
&\mathbb{E}_{t} \Big[ \lambda(t) \, \mathbb{E}_{\mathbf{X}(0)} \, \mathbb{E}_{\mathbf{X} | \mathbf{X}(0)} \Big\| \mathbf{J}_{\phi^{-1}}(\phi(\mathbf{X}, t))^\top s_\theta(\mathbf{X}, t) + \nabla_{\mathbf{x}}\cdot \left(\mathbf{J}_{\phi^{-1}}(\phi(\mathbf{x}, t), t) ^\top\right) \\
&\qquad\qquad - \mathbf{J}_{\phi^{-1}}(\phi(\mathbf{X}, t))^\top \nabla_{\mathbf{x}} \log p_{0t}(\mathbf{X} | \mathbf{X}(0)) - \nabla_{\mathbf{x}}\cdot \left(\mathbf{J}_{\phi^{-1}}(\phi(\mathbf{x}, t), t) ^\top\right) \Big\|^2 \Big] \\
&= \mathbb{E}_{t} \Big[ \lambda(t) \, \mathbb{E}_{\mathbf{X}(0)} \, \mathbb{E}_{\mathbf{X} | \mathbf{X}(0)} \Big\| \mathbf{J}_{\phi^{-1}}(\phi(\mathbf{X}, t))^\top \left( s_\theta(\mathbf{X}, t) - \nabla_{\mathbf{x}} \log p_{0t}(\mathbf{X} | \mathbf{X}(0)) \right) \Big\|^2 \Big],
\end{align*}
which is precisely our weighted objective \eqref{eq:weighted_loss_multidim}.

Therefore, minimizing this weighted loss in the original space is equivalent to training a score network directly in the transformed space. The Jacobian weights ensure that errors in score approximation transform appropriately under $\phi$.

\section{Derivation of the Generalized Sliced Score Matching Objective}
\label{appendix:gssm_derivation}

The derivation of Generalized Sliced Score Matching (GSSM) follows the principles established in the original score matching \citep{hyvarinen2005estimation} and sliced score matching \citep{song2019slicedscorematchingscalable} papers. We extend these ideas to account for general smooth functions.

\begin{assumption}[Integrability Conditions]
\label{assumption:integrability}
For the data distribution $p_d(\mathbf{x})$ and score approximation $s_\theta(\mathbf{x})$:
\begin{enumerate}
    \item $\mathbb{E}_{p_d}[\|\nabla_{\mathbf{x}} \log p_d(\mathbf{X})\|^2] < \infty$
    \item $\mathbb{E}_{p_d}[\|s_\theta(\mathbf{X})\|^2] < \infty$
    \item For any $v \sim p_v$, $\mathbb{E}_{p_d}[\|\nabla v(\mathbf{X})\|^2] < \infty$ and $\nabla v(\mathbf{X})\neq \mathbf{0}$ almost surely.
\end{enumerate}
\end{assumption}

\begin{assumption}[Boundary Conditions]
\label{assumption:boundary}
For any $v \sim p_v$:
\begin{equation}
    p_d(\mathbf{x})(\nabla v(\mathbf{x})^\top s_\theta(\mathbf{x}))\nabla v(\mathbf{x}) \to 0 \text{ as } \|\mathbf{x}\| \to \infty
\end{equation}

\end{assumption}

\subsection{Score Matching in Transformed Space}

Consider a random vector \( \mathbf{X} \in \mathbb{R}^n \) with density \( p_d(\mathbf{x}) \). Let \( v: \mathbb{R}^n \to \mathbb{R} \) be a random twice continuously differentiable function sampled from \( p_v(v) \), with \( \nabla v(\mathbf{x}) \neq \mathbf{0} \) almost surely. We denote \( q_d^v \) as the distribution of \( Y = v(\mathbf{X}) \) when \( v \) is fixed. Following the original score matching framework, we minimize:

\begin{equation}
\label{eq:appendix_sm_loss_y}
\mathcal{L}_{\text{SM}}^{(y)}(\bar{s}_\theta) = \frac{1}{2} \mathbb{E}_{p_v} \left[ \mathbb{E}_{q_d^v} \left[ \left( \bar{s}_\theta(Y) - \nabla_y \log q_d^v(Y) \right)^2 \right] \right].
\end{equation}

From the change of variables formula (Corollary~\ref{cor:score_change_variables_different_dimensions}):
\begin{equation}
\label{eq:appendix_score_change_variables}
\nabla_y \log q_d^v(y) = \frac{1}{\| \nabla v(\mathbf{x}) \|^2} \left( \nabla v(\mathbf{x})^\top \nabla_{\mathbf{x}} \log p_d(\mathbf{x}) - R(\mathbf{x}) \right)
\end{equation}

Similarly for the model score:
\begin{equation}
\label{eq:appendix_s_theta_relation}
\bar{s}_\theta(y) = \frac{1}{\| \nabla v(\mathbf{x}) \|^2} \left( \nabla v(\mathbf{x})^\top s_\theta(x) - R(\mathbf{x}) \right)
\end{equation}

where $R(\mathbf{x})= \Delta v(\mathbf{x}) + \sum_{i=1}^n \left( \frac{\partial v}{\partial x_i} \right)^2 \frac{\partial}{\partial y} \log p_d((x_j)_{j \neq i} \mid y) $.

Substituting these relations into \eqref{eq:appendix_sm_loss_y}:
\begin{equation}
\label{eq:appendix_L_intermediate}
\mathcal{L}_{\text{intermediate}}(s_\theta) = \frac{1}{2} \mathbb{E}_{p_v}\mathbb{E}_{p_d} \left[ \frac{1}{\| \nabla v(\mathbf{X}) \|^4} \left( \nabla v(\mathbf{X})^\top s_\theta(\mathbf{X}) -  \nabla v(\mathbf{X})^\top \nabla_{\mathbf{x}} \log p_d(\mathbf{X})  \right)^2  \right]
\end{equation}

\subsection{Normalization Considerations}

The $\|\nabla v(\mathbf{x})\|^4$ term in the denominator of \eqref{eq:appendix_L_intermediate} disproportionately amplifies the loss when $\|\nabla v(\mathbf{x})\|$ is small. We address this through two alternative approaches.

First, we can weight \eqref{eq:appendix_L_intermediate} by $\|\nabla v(\mathbf{x})\|^2$, yielding:
\begin{equation}
\label{eq:appendix_L_normalized}
\mathcal{L}_{\text{normalized}}(s_\theta) = \frac{1}{2} \mathbb{E}_{p_v}\mathbb{E}_{p_d} \left[  \left( \nabla \tilde{v}(\mathbf{X})^\top s_\theta(\mathbf{X}) - \nabla \tilde{v}(\mathbf{X})^\top \nabla_{\mathbf{x}} \log p_d(\mathbf{X})  \right)^2  \right]
\end{equation}
where $\nabla \tilde{v}(\mathbf{x}) = \frac{\nabla v(\mathbf{x})}{\|\nabla v(\mathbf{x})\|}$.
This normalized loss is gradient magnitude-invariant during optimization. However, the normalization introduces computational overhead and makes the variance reduction version of the loss intractable.

As an alternative, we weight \eqref{eq:appendix_L_intermediate} by $\|\nabla v(\mathbf{x})\|^4$, resulting in:
\begin{equation}
\label{eq:appendix_L_unnormalized}
\mathcal{L}_{\text{un-normalized}}(s_\theta) = \frac{1}{2} \mathbb{E}_{p_v}\mathbb{E}_{p_d} \left[  \left( \nabla {v}(\mathbf{X})^\top s_\theta(\mathbf{X}) - \nabla {v}(\mathbf{X})^\top \nabla_{\mathbf{x}} \log p_d(\mathbf{X})  \right)^2  \right]
\end{equation}
While this loss places greater emphasis on regions where $\|\nabla v(\mathbf{x})\|$ is large, it proves more tractable in practice. We proceed with loss \eqref{eq:appendix_L_unnormalized} in the following derivations, noting that substituting $\tilde{v}(\mathbf{x})$ for $v(\mathbf{x})$ preserves all subsequent mathematical relationships.

\subsection[Eliminating the Dependence on Analytical Score]{Eliminating the Dependence on $\nabla_{\mathbf{x}} \log p(\mathbf{x})$}

To eliminate $\nabla_{\mathbf{x}} \log p_d(\mathbf{x})$, we expand the square and apply integration by parts.

First, expand the squared term:
\begin{align*}
  &\frac{1}{2} \mathbb{E}_{p_v}\mathbb{E}_{p_d} \left[  \left( \nabla {v}(\mathbf{x})^\top s_\theta(\mathbf{x}) - \nabla {v}(\mathbf{x})^\top \nabla_{\mathbf{x}} \log p_d(\mathbf{x})  \right)^2  \right]\\
  &=  \frac{1}{2} \mathbb{E}_{p_v}\mathbb{E}_{p_d} \left[\left( \nabla v(\mathbf{x})^\top s_\theta(\mathbf{x})\right)^2\right]
     - \mathbb{E}_{p_v}\mathbb{E}_{p_d} \left[ \left( \nabla v(\mathbf{x})^\top s_\theta(\mathbf{x}) \right) \left( \nabla v(\mathbf{x})^\top \nabla_{\mathbf{x}} \log p_d(\mathbf{x}) \right)\right] +C 
\end{align*}

We simplify the second part of the loss using integration by parts.

Consider the term:
\begin{align}
&\mathbb{E}_{p_v}\mathbb{E}_{p_d} \left[ \left( \nabla v(\mathbf{X})^\top s_\theta(\mathbf{X}) \right) \left( \nabla v(\mathbf{X})^\top \nabla_{\mathbf{x}} \log p_d(\mathbf{X}) \right)\right]\\
 &= \mathbb{E}_{p_v}\mathbb{E}_{p_d} \left[ \left( \nabla v(\mathbf{X})^\top s_\theta(\mathbf{X}) \right) \left( \nabla v(\mathbf{X})^\top \frac{\nabla_{\mathbf{x}} p_d(\mathbf{X})}{p_d(\mathbf{X})} \right) \right] \\
&= \mathbb{E}_{p_v}\left[\int \left( \nabla v(\mathbf{x})^\top s_\theta (\mathbf{x})\right) \left( \nabla v(\mathbf{x})^\top \frac{\nabla_{\mathbf{x}} p_d(\mathbf{x})}{p_d(\mathbf{x})} \right) p_d(\mathbf{x}) d\mathbf{x}\right] \\
&= \mathbb{E}_{p_v}\left[\int \left( \nabla v(\mathbf{x})^\top s_\theta (\mathbf{x})\right) \left( \nabla v(\mathbf{x})^\top \nabla_{\mathbf{x}} p_d(\mathbf{x}) \right) d\mathbf{x}\right].
\end{align}

We integrate by parts taking in account assumption \ref{assumption:boundary}:
\begin{align}
 &\mathbb{E}_{p_v}\left[\int \left( \nabla v(\mathbf{x})^\top s_\theta \right) \left( \nabla v(\mathbf{x})^\top \nabla_{\mathbf{x}} p_d(\mathbf{x}) \right) d\mathbf{x}\right] \\
 &=  \mathbb{E}_{p_v}\left[- \int p_d(\mathbf{x}) \nabla_{\mathbf{x}} \cdot \left( \left( \nabla v(\mathbf{x})^\top s_\theta(\mathbf{x}) \right) \nabla v(\mathbf{x}) \right) d\mathbf{x}\right].
\end{align}

Compute the divergence:
\begin{align}
\nabla_{\mathbf{x}} \cdot \left( \left( \nabla v^\top(\mathbf{x}) s_\theta(\mathbf{x}) \right) \nabla v(\mathbf{x}) \right) &= \left( \nabla v(\mathbf{x})^\top s_\theta(\mathbf{x}) \right) \Delta v(\mathbf{x})\\
& + \left( s_\theta(\mathbf{x})^\top \mathbf{H}_v(\mathbf{x}) \right) \nabla v(\mathbf{x}) + \nabla v(\mathbf{x})^\top \nabla_{\mathbf{x}} s_\theta(\mathbf{x}) \nabla v(\mathbf{x}).
\end{align}

Thus,
\begin{align*}
&\mathbb{E}_{p_v}\mathbb{E}_{p_d} \left[ \left( \nabla v(\mathbf{X})^\top s_\theta(\mathbf{X}) \right) \left( \nabla v(\mathbf{X})^\top \nabla_{\mathbf{x}} \log p_d(\mathbf{X}) \right) \right] = \\
&- \mathbb{E}_{p_v}\mathbb{E}_{p_d} \left[ \left( \nabla v(\mathbf{X})^\top s_\theta(\mathbf{X}) \right) \Delta v(\mathbf{X}) + s_\theta(\mathbf{X})^\top \mathbf{H}_v(\mathbf{X}) \nabla v(\mathbf{X}) + \nabla v(\mathbf{X})^\top \nabla_{\mathbf{X}} s_\theta \nabla v(\mathbf{X}) \right].
\end{align*}

Collecting terms, the loss becomes:
\begin{align}
\mathcal{L}_{\text{GSSM}}(s_\theta) &= \frac{1}{2} \mathbb{E}_{p_v}\mathbb{E}_{p_d} \left[ \left( \nabla v(\mathbf{X})^\top s_\theta(\mathbf{X}) \right)^2 \right] + \mathbb{E}_{p_v}\mathbb{E}_{p_d} \left[ \nabla v(\mathbf{X})^\top \nabla_{\mathbf{x}} s_\theta(\mathbf{X}) \nabla v(\mathbf{X}) \right] \nonumber \\
&+ \mathbb{E}_{p_v}\mathbb{E}_{p_d} \left[ s_\theta(\mathbf{X})^\top \mathbf{H}_v(\mathbf{X}) \nabla v(\mathbf{X}) \right] + \mathbb{E}_{p_v}\mathbb{E}_{p_d} \left[ \nabla v(\mathbf{X})^\top s_\theta(\mathbf{X})  \Delta v(\mathbf{X})  \right].\label{eq:appendix_gssm_final}
\end{align}

\section{Quadratic GSSM Details}
\label{appendix:quadratic_gssm}
We choose a quadratic random function:
\begin{equation}
v(\mathbf{x}) = \frac{1}{2}\mathbf{x}^\top \mathbf{A} \mathbf{x} + \mathbf{b}^\top \mathbf{x}
\end{equation}
where $\mathbf{A} \in \mathbb{R}^{n \times n}$ is a random symmetric matrix, and $\mathbf{b} \in \mathbb{R}^n$ is a random vector. The random variables are constructed with the following properties:
\begin{itemize}
   \item \textbf{Matrix $\mathbf{A}$}:
   \begin{itemize}
       \item Symmetric: $\mathbf{A} = \mathbf{A}^\top$
       \item Diagonal entries: $A_{ii} \sim \mathcal{N}(0, \sigma_1^2)$
       \item Off-diagonal entries: $A_{ij} = A_{ji} \sim \mathcal{N}(0, \sigma_2^2)$ for $i < j$
       \item Independence of entries, except that off-diagonal entries $A_{ij}$ and $A_{ji}$ are dependent due to symmetry
   \end{itemize}
   \item \textbf{Vector $\mathbf{b}$}:
   \begin{itemize}
       \item  $\mathbf{b}\sim \mathcal{N}(0, \sigma_3^2I)$
       \item Independent from $\mathbf{A}$
   \end{itemize}
\end{itemize}

\subsection{Quadratic GSSM Loss}
The components needed for the GSSM loss are as follows:
\begin{align}
\nabla v(\mathbf{x}) &= \mathbf{A}\mathbf{x} + \mathbf{b} \label{eq:grad_v_general} \\
\mathbf{H}_v(\mathbf{x}) &= \mathbf{A} \label{eq:hess_v_general} \\
\Delta v(\mathbf{x}) &= \operatorname{tr}(\mathbf{A}) \label{eq:laplace_v_general}
\end{align}

To satisfy the conditions for the GSSM loss \ref{eq:gssm_loss}, we require $\nabla v(\mathbf{x}) \neq \mathbf{0}$. Thus we require $x \neq \mathbf{0}$.
For $\mathbf{x} \neq \mathbf{0}$, $\nabla v(\mathbf{x})$ is almost surely non-zero.

Substituting these into equation~\eqref{eq:gssm_loss}, we obtain:
\begin{align}
\mathcal{L}_{\text{GSSM}}(s_\theta) &= \frac{1}{2} \mathbb{E}_{\mathbf{A}, \mathbf{b}}\mathbb{E}_{p_d} \left[ \left( (\mathbf{A}\mathbf{X} + \mathbf{b})^\top s_\theta(\mathbf{X}) \right)^2 \right] \nonumber \\
&+ \mathbb{E}_{\mathbf{A}, \mathbf{b}}\mathbb{E}_{p_d} \left[ (\mathbf{A}\mathbf{X} + \mathbf{b})^\top \nabla_{\mathbf{x}} s_\theta(\mathbf{X}) (\mathbf{A}\mathbf{X} + \mathbf{b}) \right] \nonumber \\
&+ \mathbb{E}_{\mathbf{A}, \mathbf{b}}\mathbb{E}_{p_d} \left[ s_\theta(\mathbf{X})^\top \mathbf{A} (\mathbf{A}\mathbf{X} + \mathbf{b}) \right] \nonumber \\
&+ \mathbb{E}_{\mathbf{A}, \mathbf{b}}\mathbb{E}_{p_d} \left[ (\mathbf{A}\mathbf{X} + \mathbf{b})^\top s_\theta(\mathbf{X}) \operatorname{tr}(\mathbf{A}) \right] \label{eq:gssm_terms}
\end{align}

\subsection{Variance Reduction for Quadratic GSSM}
To reduce the variance of our estimator, we integrate out the randomness in $\mathbf{A}$ and $\mathbf{b}$. We compute the expectations of each term in the GSSM loss with respect to the randomness in $\mathbf{A}$ and $\mathbf{b}$, assuming $s_\theta(\mathbf{x})$ is independent of $\mathbf{A}$ and $\mathbf{b}$.

\subsubsection{Expectation of the Quadratic Term}
We first compute the expectation of the quadratic term in Equation~\eqref{eq:gssm_terms}:

\begin{align}
\mathbb{E}_{\mathbf{A}, \mathbf{b}} \left[ \left( (\mathbf{A}\mathbf{x} + \mathbf{b})^\top s_\theta(\mathbf{x}) \right)^2 \right] &= \mathbb{E}_{\mathbf{A}} \left[ \left( s_\theta(\mathbf{x})^\top \mathbf{A} \mathbf{x} \right)^2 \right]\\
&+ 2 \mathbb{E}_{\mathbf{A}, \mathbf{b}} \left[ \left( s_\theta(\mathbf{x})^\top \mathbf{A} \mathbf{x} \right) \left( s_\theta(\mathbf{x})^\top \mathbf{b} \right) \right] + \mathbb{E}_{\mathbf{b}} \left[ \left( s_\theta(\mathbf{x})^\top \mathbf{b} \right)^2 \right]
\end{align}

Since $\mathbf{A}$ and $\mathbf{b}$ have zero mean and are independent, the cross-term vanishes:
\begin{equation}
\mathbb{E}_{\mathbf{A}, \mathbf{b}} \left[ \left( s_\theta(\mathbf{x})^\top \mathbf{A} \mathbf{x} \right) \left( s_\theta(\mathbf{x})^\top \mathbf{b} \right) \right] = 0
\end{equation}

Thus, we have:
\begin{equation}
\mathbb{E}_{\mathbf{A}, \mathbf{b}} \left[ \left( (\mathbf{A}\mathbf{x} + \mathbf{b})^\top s_\theta(\mathbf{x}) \right)^2 \right] = \mathbb{E}_{\mathbf{A}} \left[ \left( s_\theta(\mathbf{x})^\top \mathbf{A} \mathbf{x} \right)^2 \right] + \mathbb{E}_{\mathbf{b}} \left[ \left( s_\theta(\mathbf{x})^\top \mathbf{b} \right)^2 \right]
\end{equation}

\paragraph{Computing $\mathbb{E}_{\mathbf{A}} \left[ \left( s_\theta(\mathbf{x})^\top \mathbf{A} \mathbf{x} \right)^2 \right]$}
We expand and compute:
\begin{align}
\mathbb{E}_{\mathbf{A}} \left[ \left( s_\theta(\mathbf{x})^\top \mathbf{A} \mathbf{x} \right)^2 \right] &= \mathbb{E}_{\mathbf{A}} \left[ \left( \sum_{i,j} s_{\theta i}(\mathbf{x}) A_{ij} x_j \right)^2 \right] \\
&= \sum_{i,j,k,l} s_{\theta i}(\mathbf{x}) s_{\theta k}(\mathbf{x}) x_j x_l \mathbb{E}[A_{ij} A_{kl}]
\end{align}

Using the properties of $\mathbf{A}$, the expectation $\mathbb{E}[A_{ij} A_{kl}]$ is non-zero only when $(i,j) = (k,l)$ or $(i,j) = (l,k)$. Therefore:
\begin{equation}
\mathbb{E}[A_{ij} A_{kl}] = \begin{cases}
\sigma_1^2 & \text{if } i = j = k = l \\
\sigma_2^2 & \text{if } i = k, j = l, i \neq j \\
\sigma_2^2 & \text{if } i = l, j = k, i \neq j \\
0 & \text{otherwise}
\end{cases}
\end{equation}

The expected value becomes:
\begin{align}
\mathbb{E}_{\mathbf{A}} \left[ \left( s_\theta(\mathbf{x})^\top \mathbf{A} \mathbf{x} \right)^2 \right] &=  \left(\sigma_1^2-2\sigma_2^2\right) \sum_{i} s_{\theta i}(\mathbf{x})^2 x_i^2 + \sigma_2^2 \left( \sum_{i \neq j} s_{\theta i}(\mathbf{x})^2 x_j^2 +  \sum_{i<j} s_{\theta i}(\mathbf{x}) s_{\theta j}(\mathbf{x}) x_i x_j \right) \nonumber \\
&=\left( \sigma_1^2 - 2\sigma_2^2 \right) \sum_{i} s_{\theta i}(\mathbf{x})^2 x_i^2 + \sigma_2^2 \left( \| \mathbf{x} \|^2 \| s_\theta(\mathbf{x}) \|^2 + \left( s_\theta(\mathbf{x})^\top \mathbf{x} \right)^2 \right)
\end{align}

Since the entries of $\mathbf{b}$ are independent with variance $\sigma_3^2$, we have:
\begin{equation}
\mathbb{E}_{\mathbf{b}} \left[ \left( s_\theta(\mathbf{x})^\top \mathbf{b} \right)^2 \right] = \sigma_3^2 \| s_\theta(\mathbf{x}) \|^2
\end{equation}

Combining both results, we obtain:
\begin{align}
\mathbb{E}_{\mathbf{A}, \mathbf{b}} \left[ \left( (\mathbf{A}\mathbf{x} + \mathbf{b})^\top s_\theta(\mathbf{x}) \right)^2 \right] &= \left( \sigma_1^2 - 2\sigma_2^2 \right) \sum_{i} s_{\theta i}(\mathbf{x})^2 x_i^2 \nonumber \\
&+ \sigma_2^2 \left( \| \mathbf{x} \|^2 \| s_\theta(\mathbf{x}) \|^2 + \left( s_\theta(\mathbf{x})^\top \mathbf{x} \right)^2 \right) + \sigma_3^2 \| s_\theta(\mathbf{x}) \|^2 \label{eq:expectation_T1}
\end{align}

\subsubsection{Expectation of the Hessian Term}
The Hessian term from Equation~\eqref{eq:gssm_terms} is:
\begin{equation}
\mathbb{E}_{\mathbf{A}, \mathbf{b}} \left[ s_\theta(\mathbf{x})^\top \mathbf{A} (\mathbf{A} \mathbf{x} + \mathbf{b}) \right] = \mathbb{E}_{\mathbf{A}} \left[ s_\theta(\mathbf{x})^\top \mathbf{A}^2 \mathbf{x} \right] + \mathbb{E}_{\mathbf{A}} \left[ s_\theta(\mathbf{x})^\top \mathbf{A} \right] \mathbb{E}_{\mathbf{b}} \left[ \mathbf{b} \right]
\end{equation}

Since $\mathbb{E}_{\mathbf{A}} [ \mathbf{A} ] = \mathbf{0}$ and $\mathbb{E}_{\mathbf{b}} [ \mathbf{b} ] = \mathbf{0}$, the second term vanishes. 

Computing $\mathbb{E}_{\mathbf{A}} \left[ \mathbf{A}^2 \right]$:
\begin{equation}
\mathbb{E}_{\mathbf{A}} \left[ \mathbf{A}^2 \right] = \left( \sigma_1^2 - \sigma_2^2 \right) \mathbf{I} + n \sigma_2^2 \mathbf{I} = \left( \sigma_1^2 + (n - 1) \sigma_2^2 \right) \mathbf{I}
\end{equation}

Therefore:
\begin{equation}
\mathbb{E}_{\mathbf{A}, \mathbf{b}} \left[ s_\theta(\mathbf{x})^\top \mathbf{A} (\mathbf{A} \mathbf{x} + \mathbf{b}) \right] = \left( \sigma_1^2 + (n - 1) \sigma_2^2 \right) s_\theta(\mathbf{x})^\top \mathbf{x} \label{eq:expectation_T3}
\end{equation}

\subsubsection{Expectation of the Trace Term}
The trace term from Equation~\eqref{eq:gssm_terms} is:
\begin{equation}
\mathbb{E}_{\mathbf{A}, \mathbf{b}} \left[ (\mathbf{A}\mathbf{x} + \mathbf{b})^\top s_\theta(\mathbf{x}) \operatorname{tr}(\mathbf{A}) \right] = \sigma_1^2s_\theta(\mathbf{x})^\top \mathbf{x}\label{eq:expectation_T4}
\end{equation}

\subsubsection{Final Variance-Reduced Loss}
Combining Equations~\eqref{eq:expectation_T1}, \eqref{eq:expectation_T3}, \eqref{eq:expectation_T4}, and \eqref{eq:gssm_terms}, we obtain the variance-reduced GSSM loss:
\begin{align}
\mathcal{L}_{\text{GSSM-VR}}(s_\theta) &= \frac{1}{2} \mathbb{E}_{p_d} \big[ \left( \sigma_1^2 - 2\sigma_2^2 \right) \sum_{i} s_{\theta i}(\mathbf{X})^2 X_i^2\\
&+ \sigma_2^2 \left( \| \mathbf{X} \|^2 \| s_\theta(\mathbf{X}) \|^2 + \left( s_\theta(\mathbf{X})^\top \mathbf{X} \right)^2 \right) + \sigma_3^2 \| s_\theta(\mathbf{X}) \|^2 \big] \nonumber \\
&+ \mathbb{E}_{p_d} \left[ \left( 2\sigma_1^2 + (n - 1) \sigma_2^2 \right) s_\theta(\mathbf{X})^\top \mathbf{X} \right] \nonumber \\
&+ \mathbb{E}_{p_d}\mathbb{E}_{\mathbf{A}, \mathbf{b}} \left[ (\mathbf{A}\mathbf{X} + \mathbf{b})^\top \nabla_{\mathbf{x}} s_\theta(\mathbf{X}) (\mathbf{A}\mathbf{X} + \mathbf{b}) \right] \label{eq:final_gssm_vr}
\end{align}

\subsection[Efficient Sampling]{Sampling from \(\mathbf{A}\mathbf{x}\)}

In this section, we detail the construction of the random symmetric matrix \(\mathbf{A}\) used in our quadratic transformation and describe how to efficiently sample \(\mathbf{A}\mathbf{x}\) without explicitly forming \(\mathbf{A}\).

\paragraph{Gaussian Orthogonal Ensemble (GOE) Matrices}
We define \(\mathbf{A} \in \mathbb{R}^{n \times n}\) as a random symmetric matrix drawn from the Gaussian Orthogonal Ensemble (GOE) with entries:
\begin{align*}
A_{ii} &\sim \mathcal{N}(0, \sigma^2), \quad \text{for } i = 1, \dots, n, \\
A_{ij} &= A_{ji} \sim \mathcal{N}(0, \sigma^2/2), \quad \text{for } i < j.
\end{align*}
This ensures that each off-diagonal element has variance \(\sigma^2/2\), while diagonal elements have variance \(\sigma^2\).

\paragraph{Distribution of \(\mathbf{A}\mathbf{x}\)}
For a fixed vector \(\mathbf{x} \in \mathbb{R}^n\), the product \(\mathbf{A}\mathbf{x}\) is a zero-mean Gaussian vector:
\begin{equation*}
\mathbf{A}\mathbf{x} \sim \mathcal{N}\left( \mathbf{0},\, \boldsymbol{\Sigma} \right),
\end{equation*}
where the covariance matrix \(\boldsymbol{\Sigma}\) is given by:
\begin{equation*}
\boldsymbol{\Sigma} = \mathbb{E}\left[ \mathbf{A}\mathbf{x} (\mathbf{A}\mathbf{x})^\top \right] = \frac{\sigma^2}{2} \left( \|\mathbf{x}\|^2 \mathbf{I}_n + \mathbf{x}\mathbf{x}^\top \right).
\end{equation*}

\paragraph{Efficient Sampling Procedure}
To sample \(\mathbf{A}\mathbf{x}\) without constructing \(\mathbf{A}\), we use the following steps:
\begin{enumerate}
    \item Sample \(\boldsymbol{\epsilon} \sim \mathcal{N}(\mathbf{0}, \mathbf{I}_n)\).
    \item Sample \(z \sim \mathcal{N}(0, 1)\), independently of \(\boldsymbol{\epsilon}\).
    \item Compute \(\mathbf{A}\mathbf{x}\) as:
    \begin{equation*}
    \mathbf{A}\mathbf{x} = \frac{\sigma}{\sqrt{2}} \left( \|\mathbf{x}\| \boldsymbol{\epsilon} + z \mathbf{x} \right).
    \end{equation*}
\end{enumerate}

\subsection{Additional Experimental Results}

Beyond the score matching loss presented in the main text, we also evaluated the models using test log-likelihood. These estimates were computed using Annealed Importance Sampling (AIS) with a proposal distribution of $\mathcal{N}(0, 2\mathbf{I})$ and $1,000,000$ samples, following the methodology of \cite{song2019slicedscorematchingscalable}. Figure~\ref{fig:test_ll} presents these results across all three UCI datasets.

\begin{figure}[ht]
    \begin{center}
    \includegraphics[width=0.9\textwidth]{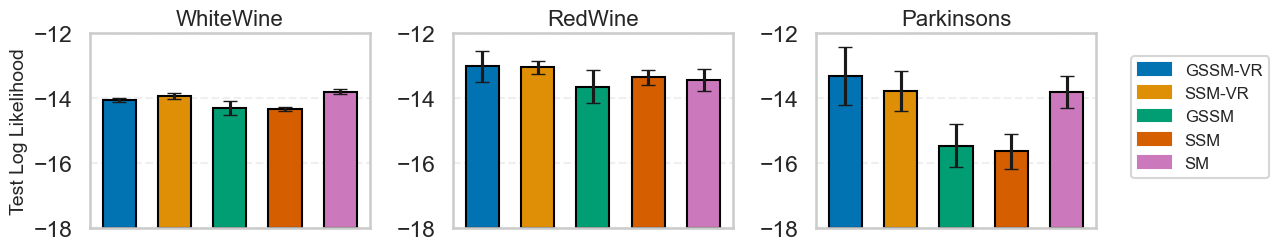}
    \caption{Test log-likelihood for DKEF models trained with different loss functions. Higher values indicate better performance.}
    \label{fig:test_ll}
    \end{center}
\end{figure}

The test log-likelihood results largely align with our score matching loss findings. GSSM-VR demonstrates superior performance on RedWine and Parkinsons datasets, while showing slightly lower performance on WhiteWine.

\section{Deep Kernel Exponential Families with Generalized Sliced Score Matching}
\label{appendix:gssm_dkef}

For our experiments, we follow the setup of \cite{song2019slicedscorematchingscalable}, who demonstrated the effectiveness of sliced score matching with Deep Kernel Exponential Families (DKEFs). We maintain their model architecture and training procedure, changing only the loss function to our GSSM objective.

\subsection{Deep Kernel Exponential Family Models}

Following \cite{wenliang2021learningdeepkernelsexponential} and \cite{song2019slicedscorematchingscalable}, we approximate the unnormalized log density as:
\begin{equation}
\label{eq:dkef_density}
\log \tilde{p}_f(\mathbf{x}) = f(\mathbf{x}) + \log q_0(\mathbf{x}),
\end{equation}
where $q_0(\mathbf{x})$ is a fixed function, and we represent $f$ as with Nyström approximation with $L$ inducing points $\{\mathbf{z}_l\}_{l=1}^L$:
\begin{equation}
\label{eq:f_nystrom}
f(\mathbf{x}) = \sum_{l=1}^L \alpha_l k(\mathbf{x}, \mathbf{z}_l),
\end{equation}
where $\boldsymbol{\alpha} = (\alpha_1, \dots, \alpha_L)^\top$ are the model parameters to be learned.

Following \cite{song2019slicedscorematchingscalable}, we use their deep kernel architecture, which expresses the kernel as a mixture of 3 Guassian kernels:
\begin{equation}
k_w(\mathbf{x}, \mathbf{y}) = \sum_{r=1}^{3} \rho_r \exp \left( -\frac{1}{2\sigma_r^2} \|\phi_{w_r}(\mathbf{x}) - \phi_{w_r}(\mathbf{y})\|^2 \right).
\end{equation}
In the above equation:
\begin{itemize}
    \item $\phi_{w_r}(\cdot)$ denotes a feature mapping associated with the $r$th component of the kernel, parameterized by $w_r$.
    \item $\sigma_r$ represents the length scale for the $r$th component.
    \item $\rho_r$ are nonnegative mixture coefficients that weight the contribution of each component to the overall kernel.
\end{itemize}

\subsection{Quadratic Form of the GSSM Loss}

As in standard score matching or sliced score matching, the GSSM loss also reduces to a quadratic form in terms of the model parameters $\boldsymbol{\alpha}$.\subsection{Quadratic Form of GSSM Loss}
We now show that our GSSM loss, like SSM, becomes quadratic in $\boldsymbol{\alpha}$. For DKEF models, the score function is linear in $\boldsymbol{\alpha}$:
\begin{equation}
s_{\boldsymbol{\theta}}(\mathbf{x}) = \sum_{l=1}^L \alpha_l \nabla_{\mathbf{x}} k(\mathbf{x}, \mathbf{z}_l) + \nabla_{\mathbf{x}} \log q_0(\mathbf{x})
\end{equation}

From equation \eqref{eq:gssm_loss}, our GSSM loss is:
\begin{align}
\mathcal{L}_{\text{GSSM}}(s_\theta) &= \frac{1}{2}\mathbb{E}_{p_d} \mathbb{E}_{p_v} \left[ \left( \nabla_{\mathbf{x}}v^\top(\mathbf{X}) s_\theta(\mathbf{X}) \right)^2 \right] + \mathbb{E}_{p_d} \mathbb{E}_{p_v}  \left[\nabla_{\mathbf{x}} v^\top(\mathbf{X}) \nabla_{\mathbf{x}} s_\theta(\mathbf{X})\nabla_{\mathbf{x}} v(\mathbf{X}) \right] \nonumber \\
&\quad + \mathbb{E}_{p_d} \mathbb{E}_{p_v}  \left[ s_\theta(\mathbf{X})^\top \mathbf{H}_v(\mathbf{X})\nabla_{\mathbf{x}}v(\mathbf{X}) \right] + \mathbb{E}_{p_d} \mathbb{E}_{p_v}  \left[ \nabla_{\mathbf{x}}v^\top(\mathbf{X}) s_\theta(\mathbf{X})  \Delta v(\mathbf{X})  \right].
\end{align}

Let us examine each term:

\subsubsection{First Term}
Define $g_l(\mathbf{x}) = \nabla_{\mathbf{x}} k(\mathbf{x}, \mathbf{z}_l)$ and $h(\mathbf{x}) = \nabla_{\mathbf{x}} \log q_0(\mathbf{x})$. Then:
\begin{align}
\left( \nabla v(\mathbf{x})^\top s_\theta(\mathbf{x}) \right)^2 &= \left(\nabla v(\mathbf{x})^\top \left(\sum_{l=1}^L \alpha_l g_l(\mathbf{x}) + h(\mathbf{x})\right)\right)^2 \nonumber \\
&= \left(\sum_{l=1}^L \alpha_l \nabla v(\mathbf{x})^\top g_l(\mathbf{x}) + \nabla v(\mathbf{x})^\top h(\mathbf{x})\right)^2
\end{align}

Define $\tilde{g}_l(\mathbf{x}) = \nabla v(\mathbf{x})^\top g_l(\mathbf{x})$ and $\tilde{h}(\mathbf{x}) = \nabla v(\mathbf{x})^\top h(\mathbf{x})$. Then:
\begin{align}
\frac{1}{2}\mathbb{E}_{p_v}\mathbb{E}_{p_d}\left[\left( \nabla v(\mathbf{X})^\top s_\theta(\mathbf{X}) \right)^2\right] &= \frac{1}{2} \boldsymbol{\alpha}^\top \mathbf{G}_1 \boldsymbol{\alpha} + \boldsymbol{\alpha}^\top \mathbf{b}_1 + C_1
\end{align}
where
\begin{align}
(\mathbf{G}_1)_{l,l'} &= \mathbb{E}_{p_v}\mathbb{E}_{p_d} \left[\tilde{g}_l(\mathbf{X})\tilde{g}_{l'}(\mathbf{X})\right] \\
(\mathbf{b}_1)_l &= \mathbb{E}_{p_v}\mathbb{E}_{p_d} \left[\tilde{g}_l(\mathbf{X})\tilde{h}(\mathbf{X})\right]
\end{align}

\subsubsection{Second Term}
We see:
\begin{align}
\nabla v(\mathbf{x})^\top \nabla_{\mathbf{x}} s_\theta(\mathbf{x}) \nabla v(\mathbf{x}) &= \nabla v(\mathbf{x})^\top \left(\sum_{l=1}^L \alpha_l \nabla_{\mathbf{x}} \nabla_{\mathbf{x}}^\top k(\mathbf{x}, \mathbf{z}_l) + \nabla_{\mathbf{x}} \nabla_{\mathbf{x}}^\top \log q_0(\mathbf{x})\right)\nabla v(\mathbf{x}) \nonumber \\
&= \sum_{l=1}^L \alpha_l \nabla v(\mathbf{x})^\top \nabla_{\mathbf{x}} \nabla_{\mathbf{x}}^\top k(\mathbf{x}, \mathbf{z}_l) \nabla v(\mathbf{x}) + \nabla v(\mathbf{x})^\top \nabla_{\mathbf{x}} \nabla_{\mathbf{x}}^\top \log q_0(\mathbf{x})\nabla v(\mathbf{x})
\end{align}

Therefore:
\begin{align}
\mathbb{E}_{p_v}\mathbb{E}_{p_d}\left[\nabla v(\mathbf{X})^\top \nabla_{\mathbf{x}} s_\theta(\mathbf{X}) \nabla v(\mathbf{X})\right] &= \boldsymbol{\alpha}^\top \mathbf{b}_2 + C_2
\end{align}
where
\begin{align}
(\mathbf{b}_2)_l &= \mathbb{E}_{p_v}\mathbb{E}_{p_d} \left[\nabla v(\mathbf{X})^\top \nabla_{\mathbf{x}} \nabla_{\mathbf{x}}^\top k(\mathbf{X}, \mathbf{z}_l) \nabla v(\mathbf{X})\right]
\end{align}

\subsubsection{Third Term}
For the Hessian term:
\begin{align}
s_\theta(\mathbf{x})^\top \mathbf{H}_v(\mathbf{x}) \nabla v(\mathbf{x}) &= \left(\sum_{l=1}^L \alpha_l g_l(\mathbf{x}) + h(\mathbf{x})\right)^\top \mathbf{H}_v(\mathbf{x}) \nabla v(\mathbf{x}) \nonumber \\
&= \sum_{l=1}^L \alpha_l g_l(\mathbf{x})^\top \mathbf{H}_v(\mathbf{x}) \nabla v(\mathbf{x}) + h(\mathbf{x})^\top \mathbf{H}_v(\mathbf{x}) \nabla v(\mathbf{x})
\end{align}

Therefore:
\begin{align}
\mathbb{E}_{p_v}\mathbb{E}_{p_d}\left[s_\theta(\mathbf{X})^\top \mathbf{H}_v(\mathbf{X}) \nabla v(\mathbf{X})\right] &= \boldsymbol{\alpha}^\top \mathbf{b}_3 + C_3
\end{align}
where
\begin{align}
(\mathbf{b}_3)_l &= \mathbb{E}_{p_v}\mathbb{E}_{p_d} \left[g_l(\mathbf{X})^\top \mathbf{H}_v(\mathbf{X}) \nabla v(\mathbf{X})\right]
\end{align}

\subsubsection{Fourth Term}
For the Laplacian term:
\begin{align}
\nabla v(\mathbf{x})^\top s_\theta(\mathbf{x})  \Delta v(\mathbf{x}) &= \Delta v(\mathbf{x}) \left(\sum_{l=1}^L \alpha_l \nabla v(\mathbf{x})^\top g_l(\mathbf{x}) + \nabla v(\mathbf{x})^\top h(\mathbf{x})\right).
\end{align}
Therefore:
\begin{align}
\mathbb{E}_{p_v}\mathbb{E}_{p_d}\left[\nabla v(\mathbf{X})^\top s_\theta(\mathbf{X})  \Delta v(\mathbf{X})\right] &=\boldsymbol{\alpha}^\top \mathbf{b}_4 + C_4
\end{align}

where
\begin{align}
(\mathbf{b}_4)_l &= \mathbb{E}_{p_v}\mathbb{E}_{p_d} \left[\Delta v(\mathbf{X}) \nabla v(\mathbf{X})^\top g_l(\mathbf{X})\right]
\end{align}

\subsubsection{Complete Loss}
Combining all terms, the total loss is quadratic in $\boldsymbol{\alpha}$:
\begin{equation}
\mathcal{L}_{\text{GSSM}}(\boldsymbol{\alpha}) = \frac{1}{2} \boldsymbol{\alpha}^\top \mathbf{G}_1 \boldsymbol{\alpha} + \boldsymbol{\alpha}^\top (\mathbf{b}_1 + \mathbf{b}_2 + \mathbf{b}_3+ \mathbf{b}_4) + C
\end{equation}

As in \cite{song2019slicedscorematchingscalable}, we add $\ell_2$ regularization:
\begin{equation}
\hat{\mathcal{L}}_{\text{GSSM}}(\boldsymbol{\alpha}) = \mathcal{L}_{\text{GSSM}}(\boldsymbol{\alpha}) + \frac{1}{2} \lambda_{\boldsymbol{\alpha}} \| \boldsymbol{\alpha} \|^2
\end{equation}
yielding the solution:
\begin{equation}
\boldsymbol{\alpha}^* = -(\mathbf{G}_1 + \lambda_{\boldsymbol{\alpha}}\mathbf{I})^{-1}(\mathbf{b}_1 + \mathbf{b}_2 + \mathbf{b}_3+\mathbf{b}_4)
\end{equation}

The derivation for the variance-reduced version follows similarly.

\subsection{Implementation Details}

All hyperparameters (such as network architectures, optimization settings, and the choice of $q_0(\mathbf{x})$) follow \cite{song2019slicedscorematchingscalable} to ensure a fair comparison and reproducibility. The only difference is that we replace the original loss with our GSSM-based loss. This maintains consistency with previous experiments while highlighting the direct impact of our new score matching objective.

\section{Chess Diffusion Details}
\label{appendix:chess_derivations}

\subsection{SDE Parameters and Transformation}
Let $\mathbf{f}(\mathbf{x}, t)$ and $g(t)$ be the drift and diffusion coefficients. We use a Variance Preserving (VP) SDE with:
\begin{align}
\mathbf{f}(\mathbf{x}, t) &= -\frac{1}{2}\beta(t)\mathbf{x}, \\
g(t) &= \sqrt{\beta(t)},
\end{align}
where $\beta(t)$ is the noise schedule. The additive logistic transformation is:
\begin{align}
\phi_i(\mathbf{x}) &= \frac{e^{x_i}}{1 + \sum_{j=1}^{12} e^{x_j}}, \quad i = 1, \dots, 12,
\end{align}
which maps vectors in $\mathbb{R}^{12}$ to the probability simplex.

\subsection{Transformed Drift and Diffusion Coefficients}
The transformed reverse drift $\hat{\mathbf{f}}(\mathbf{y}, t)$ and diffusion $\tilde{\mathbf{G}}(\mathbf{y}, t)$ are:
\begin{align}
\hat{f}_i(\mathbf{y}, t) &= y_i \left[ k_i - \sum_{j=1}^{12} y_j k_j \right], \\
\tilde{G}_{ij}(\mathbf{y}, t) &= 
\begin{cases}
g(t) y_i (1 - y_i), & \text{if } i = j, \\
-g(t) y_i y_j, & \text{if } i \neq j,
\end{cases}
\end{align}
where $k_i = f_i(\phi^{-1}(\mathbf{y}), t) + \frac{1}{2} g^2(t) ( 2 y_i-1)$.

\subsection{Training Details}
\label{appendix:training_details}

We trained a score-based model $s_\theta(\mathbf{x}, t)$ using a standard U-Net architecture \cite{ronneberger2015u}. The model was trained on a dataset of chess positions obtained from Kaggle \cite{badhe_chess_evaluations_2020}. Training was conducted using an NVIDIA GeForce RTX 4090 GPU. 
\end{document}